\newcommand{\cnn}{\mathbf{CNN}}
\newcommand{\rescnn}{\mathbf{resCNN}}
\newcommand{\eqcnn}{\mathbf{eqCNN}}
\newcommand{\eqrescnn}{\mathbf{eqresCNN}}
\newcommand{\eqcode}{\mathbf{eqCODE}}
\newcommand{\code}{\mathbf{CODE}}
\DeclareMathOperator*{\id}{id}
\newcommand{\R}{\mathbb{R}}
\newcommand{\Acal}{\mathcal{A}}
\newcommand{\Bcal}{\mathcal{B}}
\newcommand{\Fcal}{\mathcal{F}}
\newcommand{\Gcal}{\mathcal{G}}
\newcommand{\Hcal}{\mathcal{H}}
\newcommand{\Ical}{\mathcal{I}}
\newcommand{\revise}[1]{{\color{red}#1}}
\newcommand{\lip}{\mathrm{Lip}~}
\newcommand{\relu}{\mathrm{ReLU}}
\newcommand{\trans}{{\mathcal T}}
\DeclareMathOperator{\Span}{span}
\DeclareMathOperator{\supp}{supp}
\newtheorem{theorem}{Theorem}[section]
\newtheorem{definition}{Definition}[section]
\newtheorem{proposition}{Proposition}[section]
\newtheorem{lemma}{Lemma}[section]
\theoremstyle{remark}
\newtheorem{remark}{Remark}[section]
\numberwithin{equation}{section}
\author{Qianxiao Li}
\author{Ting Lin}
\author{Zuowei Shen}
\begin{document}

\title[UAP of Fully CNN]{On the Universal Approximation Property of Deep Fully Convolutional Neural Networks}

\begin{abstract}
    We study the approximation of shift-invariant or equivariant functions by deep fully convolutional networks from the dynamical systems perspective. We prove that deep residual fully convolutional networks and their continuous-layer counterpart can achieve universal approximation of these symmetric functions at constant channel width. Moreover, we show that the same can be achieved by non-residual variants with at least 2 channels in each layer and convolutional kernel size of at least 2. In addition, we show that these requirements are necessary, in the sense that networks with fewer channels or smaller kernels fail to be universal approximators.
    \end{abstract}
\maketitle


\section{Introduction}

Convolutional Neural Networks (CNN) are widely used as fundamental building
blocks in the design of modern deep learning architectures, for it can extract key data features with much fewer parameters, lowering both memory requirement and computational cost. When the input data contains spatial structure, such as pictures or videos, this parsimony often does not hurt their performance.
This is particularly interesting in the case of fully convolutional neural networks (FCNN)
\cite{long2015fully}, built by the composition of convolution, nonlinear activation and summing (averaging) layers,
with the last layer being a permutation invariant pooling operator, see Figure~\ref{fig:cnn}.
\begin{figure}[htbp]
    \centering
    \includegraphics*[width=\textwidth]{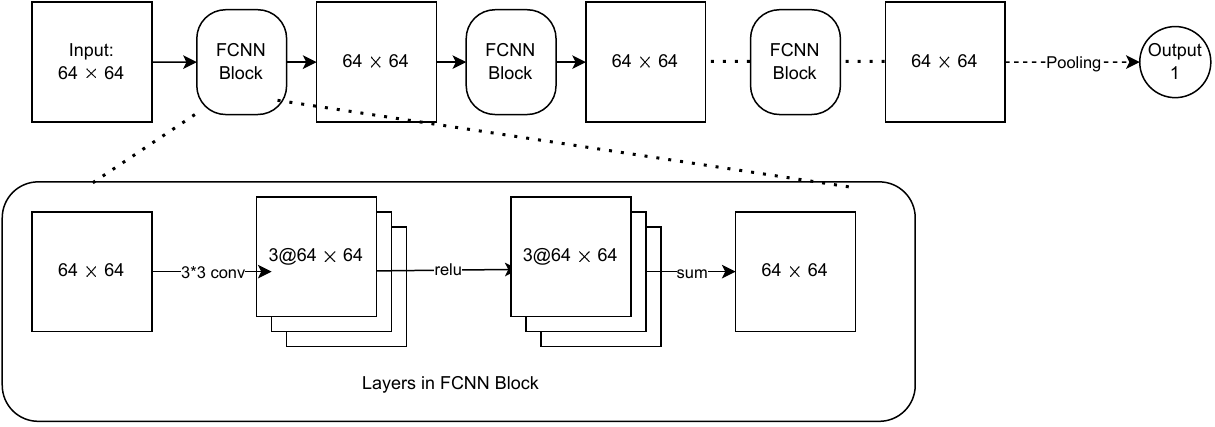}
    \caption{An illustration of fully convolutional neural network.}
    \label{fig:cnn}
    \end{figure}
Consequently, a prominent feature of FCNN is that, when shifting
the input data indices (e.g. picture, video, or other higher-dimensional spatial data), the output result should remain the same. This is called {\it shift invariance.}
An example application of FCNN is image classification problems where
the class label (or class assignment probability, under the softmax activation)
of the image remains the same under translating the image
(i.e. shifting the image pixels).
A variant of FCNN applies to problems where the output data
has the same size as the input data,
e.g. pixel-wise segmentation of images \cite{badrinarayanan2017segnet}.
In this case, simply stacking the fully convolutional layers is enough.
We call this type of networks equivariant fully convolutional neural network (eq-FCNN),
since when shifting the input data indices,
the output data indices shift by the same amount.
This is called {\it shift equivariance}.
It is believed that the success of these convolutional architectures
hinges on shift invariance or equivariance, which capture
intrinsic structures in spatial data.
From an approximation theory viewpoint, this presents a delicate trade-off between
expressiveness and invariance: layers cannot be too complex to break the invariance property,
but should not be too simple that it loses approximation power.
The interaction of invariance and network architectures has been a subject of
intense study in recent years.
For example, \cite{cohen2016steerable} designed steerable CNNs to handle the motion group for robotics.
Deep sets \cite{NIPS2017_f22e4747} are proposed to accommodate general permutation invariance and equivariance.
Other approaches to build equivariance and shift invariance include
parameter sharing \cite{ravanbakhsh2017equivariance,han2022universal} and the homogeneous space approach \cite{cohenwelling16,cohen2019general}.
See \cite{bronstein2017geometric} for a more recent survey.
Among these architectures, the FCNN is perhaps the simplest and most widely used model.
Therefore, the study of its theoretical properties
is naturally a first and fundamental step for investigating other more
complicated architectures.

In this paper, we focus on the expressive power of the FCNN. Mathematically, we
consider whether a function $F$ can be approximated via the FCNN (or eq-FCNN) function family
in $L^p$ sense. This is also known as {\it universal approximation} in $L^p$.  In the
literature, many results on fully connected neural networks can be found, e.g. \cite{lu2021deep,lu2017expressive,shen2019deep,yarotsky2018optimal,zhang2023enhancing}.  However, relatively few
results address the approximation of shift invariant functions via fully
convolutional networks.  An intuitive reason is that the symmetry
constraint (shift invariance) will hinder the unconditioned universal
approximation.  This can be also proved rigorously.
In \cite{li2022}, it is shown that if a function can be approximated by
an invariant function family to arbitrary accuracy,
then the function itself must be invariant.
As a consequence, when we consider the approximation property of the FCNN, we should
only consider shift invariant functions. This brings new difficulty for
obtaining results compared to those for fully connected neural networks.
For this reason, many existing results on convolutional network approximation rely on
some ways of breaking shift invariance, thus applying to general function
classes without symmetry constraints \cite{oono2019approximation}.
Moreover, current results on convolutional
networks usually require (at least one) layers to have a large number of
channels \cite{hwang2022universal}.

In contrast, we establish universal approximation results for fully convolutional networks where shift
invariance is preserved.  Moreover, we show that approximation can be achieved
by increasing depth at constant channel numbers, with fixed kernel size in each layer.
The main result of this paper (Theorem~\ref{thm:main}) shows that if we choose $\relu$ as the
activation function and the terminal layer is chosen as a general pooling
operator satisfying mild technical conditions (e.g. max, summation), then convolutional
layers with at least 2 channels and kernel size at least 2 can achieve universal
approximation of shift invariant functions via repeated stacking (composition).
The result is sharp in the sense that neither the size of convolution kernel nor
the channel number can be further reduced while preserving the universal
approximation property.

To prove the result on FCNN, we rely on the dynamical systems approach where
residual neural networks are idealized as continuous-time dynamical systems.
This approach was introduced in~\cite{weinan2017proposal} and first used to
develop stable architectures~\cite{Haber2017} and control-based training
algorithms~\cite{li2017maximum}.  This is also popularized in the machine
learning literature as neural ODEs \cite{Chen2018}.  On the approximation theory front, the
dynamical systems approach was used to prove universal approximation of general
model architectures through composition~\cite{li2019deep}.  The work of~\cite{li2022},
extended the result to functions/networks with symmetry constraints,
and as a corollary obtained a universal approximation result
for residual fully convolutional networks with kernel sizes equal to the image size.
The results in this paper restrict the size of kernel in a more practical way,
and can handle common architectures for applications, which typically use kernel sizes ranging from
$3-7$.
Moreover, we also establish here the sharpness of the requirements on channel numbers and kernel sizes.  
The restriction on width and kernel size actually can provide more interesting results in the theoretical setting. 
This is because if we establish our approximation results using finite (and minimal) width and kernel size requirements, they can be used to obtain the universal approximation property for a variety of larger models by simply showing them to contain our minimal construction. 

In summary, the main results of this work are as follows.
First, we prove the universal approximation property of shift-invariant functions
for both continuous and time-discretized deep residual fully convolutional neural networks
having kernel size of at least 2. This result concerns deep but possibly narrow
residual neural networks. We provide a sufficient condition on
the universal approximation property with respect to shift
invariance, which allows one to check the universality of any
given deep residual architecture.
In particular, the result rely neither on a specific choice of nonlinear
activation function, nor a choice of the last layer.
Further, we prove the universal approximation property of fully
convolutional neural network with ReLU activations having no less than two
channels each layer, and kernel size of at least 2.
Finally, we show that the channel number and kernel size requirements
above are sharp, in that networks with fewer channels or kernel sizes
do not possess the universal approximation property. 
The above three points hold true also for the approximation of shift equivariant mappings via eq-FCNN.

\section{Formulation and main results}
\label{sec:main}

In this section, we introduce the notation and formulation of the approximation
problem, and then present our main results.
We first recall the definition of convolution: 
Consider two rank $d$ tensors $\bm x$ and $\bm y \in \mathbb{X}$,
where $\mathbb{X} := \mathbb{R}^{n_1} \times \dots \times \mathbb{R}^{n_d}$.
We denote by $\bm n = [n_1, \dots, n_d]$ the data dimensions.
Define the \textit{convolution} of $\bm x, \bm y$ by
$\bm z = \bm x \ast \bm y$ with
$$
    [\bm z]_{\bm i} = \sum_{\bm j} [\bm x]_{\bm j} [\bm
    y]_{\bm i + \bm j - \bm 1}.
$$
Here, $\bm i, \bm j$ are multi-indices (beginning with $1$) and
the arithmetic uses the periodic boundary condition.
Taking $\bm x \in \mathbb R^{3\times 3}$ as an example, we denote
\begin{equation}
\bm x = \begin{bmatrix} [\bm x]_{(1,1)} & [\bm x]_{(1,2)} &   [\bm x]_{(1,3)} \\ [\bm x]_{(2,1)} & [\bm x]_{(2,2)} &   [\bm x]_{(2,3)} \\ [\bm x]_{(3,1)} & [\bm x]_{(3,2)} &   [\bm x]_{(3,3)}\end{bmatrix},
\end{equation}
where $[\bm x]_{(1,4)}$ is identified with $[\bm x]_{(1,1)}$, and similarly for the other indices.

Let us also define the \textit{translation operator} $\trans_{\bm k}$ with respect
to a multi-index $\bm k$ by $[\trans_{\bm k} \bm x]_{\bm i} = [\bm x]_{\bm i + \bm k}$.
The key symmetry condition concerned in this paper - shift equivariance - can now
be stated as the following commuting relationship: 
$$
        \trans_{\bm k} (\bm x \ast \bm y) = \bm x \ast (\trans_{\bm k} \bm y),\quad \forall x, y \in \mathbb X, \,\forall \bm k.
$$

We now introduce the definition of the fully convolutional neural network (FCNN)
architecture we subsequently study.
Let
\begin{equation}
    \label{eq:Fcal}
    \Fcal_{r}:=\left\{\sum_{i = 1}^{r} v_i \sigma(\bm w_i \ast \cdot + b_i \bm
    1), \bm w_i \in \mathbb X, v_i , b_i \in \mathbb R\right\}
\end{equation}
be a function family representing possible forms for each convolutional layer with $r$ channels.
Here, $\sigma(x) = \max(x,0)$ is the ReLU function.

Let the final layer be a \textit{pooling operation} $g: \mathbb X \to \mathbb R$ obeying the following condition:
 $g$ is Lipschitz, and permutation invariant with respect to all the coordinates of its input data,
i.e., the value of $g$ does not depend on the order of its inputs.
Examples of such a pooling operator include \textit{summation} 
$ g(\bm x) = \bm x \mapsto \sum_{\bm i} [\bm x]_{\bm i}$ and \textit{max} $g(\bm x) = \bm x \mapsto \max_{\bm i} [\bm x]_{\bm i}.$

\begin{remark}
Note that the assumption is stronger than just requiring $g$ to be shift invariant.
\end{remark}

In the above definition \eqref{eq:Fcal}, the convolution kernel has the same size of the input data. 
In practice, however, the convolutional kernel used will be more restrictive, say a
kernel size of $3$ or $5$.  To study the effect of kernel size, we define the
support for an element $\bm x \in \mathbb X$ as $\supp(\bm x) :=
(j_1,j_2,\cdots,j_d)$, where $j_s$ is the minimal number such that if
the multi-index $\bm i$ has $i_s > j_s$ for some $s$, then $[\bm x]_{\bm i} = 0.$ For example, the support of tensor $\bm x = \begin{bmatrix} 0 & 1 & 0 \\ 1 & 0 & 0 \\ 1 & 0 & 0 \end{bmatrix}$ is $(3,2)$.

\begin{remark}
Two remarks on this definition of support are in order.
\begin{itemize}
    \item[-]
First, the element $\bm x
\in \mathbb X$ with $\supp (\bm x) \le \bm j = (j_1,j_2,\cdots, j_d)$ can be
identified with an element $\tilde{\bm x} \in \mathbb R^{j_1\times
j_2\times\cdots\times j_d}$. \footnote{In what follows, we define for multi-indices $\bm i, \bm j$ the partial order
$\bm i \ge \bm j$ if $i_s \ge j_s, s = 1,2,\cdots.$.}
\item[-] 
Second, a convolution kernel with size of $s$ can
be regarded as a tensor $\bm w \in \mathbb X$ with support $\le \bm s =
(s,s,s,\cdots, s)$.
\end{itemize}
\end{remark}

Thus, we may define the convolutional layer family with support up to $\bm \ell$ as
$$
    \Fcal_{r,\bm \ell} := \left\{\sum_{i = 1}^{r} v_i \sigma(\bm w_i \ast \cdot +
    b_i \bm 1), \bm w_i \in \mathbb X, \supp(\bm w_i) \le \bm \ell, v_i , b_i \in
    \mathbb R\right\}.
$$
With these notations in mind, we now introduce the following hypothesis spaces
defining fully convolutional neural networks and their residual variants
\begin{align}
    \cnn_{r,\bm \ell}
    &=
    \{
        g \circ \bm f_m\circ \cdots \circ\bm f_1 : \bm f_1,\cdots, \bm f_m \in \Fcal_{r, \bm \ell}
        ,
        m \geq 1
    \},\\
    \rescnn_{r,\bm \ell}
    &=
    \{
        g \circ (\id + \bm f_m)\circ \cdots \circ(\id + \bm f_1) : \bm f_1,\cdots,
        \bm f_m \in \Fcal_{r, \bm \ell},
        m \geq 1
    \}.
\end{align}

Observe that all functions in the families
$\cnn_{\cdot,\cdot}$ and $\rescnn_{\cdot,\cdot}$ are shift invariant in the following sense.
\begin{definition}[Shift Invariance]
A function $\varphi : \mathbb{X} \rightarrow \R$
is called shift invariant if
$
    \varphi(\bm x) = \varphi(\trans_{\bm k} \bm x)$ for all $\bm x \in \mathbb X, \bm k
    \le \bm n.
$
A function family $\mathcal X$ is called {\it shift invariant} if for all its member are shift invariant.
\end{definition}

\begin{definition}[Shift Invariant UAP]
A function family $\mathcal X$ satisfies the shift invariant universal approximation property
(shift invariant UAP for short) if
\begin{enumerate}
\item
The function family $\mathcal X$ is shift invariant, and \item For any shift invariant continuous (or $L^p$) function $\psi$, tolerance $\varepsilon >0 $, compact set $K \subset \mathbb X$ and $p \in [1,\infty)$, there exists $\varphi \in \mathcal X$ such that
$ \|\psi - \varphi\|_{L^p(K)} \le \varepsilon.$
\end{enumerate}
\end{definition}



For any family $\mathbf{F}$ of functions $\mathbb{X}\to\mathbb{R}$,
let us define
$\mathbf{F} +\mathbb{R}:=\{\varphi + b, \varphi \in \mathbf{F} , b\in \mathbb R\}$.
This expands the hypothesis space by adding
a constant bias to the original function family $\mathbf{F}$.
The main result of this paper is as follows. \footnote{In this paper, we always fix a $p \in [1,\infty).$} 
\begin{theorem}[Universal Approximation Property of $\cnn$]
\label{thm:main}
The following statements hold:
\begin{enumerate}[label=(\arabic*),ref=\ref{thm:main}.(\arabic*)]
    \item \label{thm:main1}
    The residual FCNN hypothesis space $\rescnn_{r, \bm \ell}$ possesses the
    shift invariant UAP for $r \ge 1$ and $\bm \ell \ge 2$.  The non-residual
    hypothesis space $\cnn_{r, \bm \ell}$ possesses the shift invariant UAP for
    $r \ge 2$ and $\bm \ell \ge 2$.
    \item \label{thm:main2}
    The kernel size $\bm 2$ is optimal in the following sense: for $\bm \ell$
    with $\min \ell_s = 1$, then neither $\rescnn_{\infty, \bm \ell} + \mathbb
    R$ nor $\cnn_{\infty, \bm \ell}  + \mathbb R$ possess the shift invariant
    UAP.
    \item \label{thm:main3}
    The channel-width requirement for non-residual fully convolutional neural
    network is optimal, in the sense that the function family  $\cnn_{1, \bm
    \infty} + \mathbb R$ does not possess the shift invariant UAP.
    
\end{enumerate}

\end{theorem}

Notice that due the extended hypothesis space from the added bias,
the sharpness results are stronger than just implying that $\cnn_{\infty, \bm \ell}$
or $\rescnn_{\infty, \bm \ell}$ does not possess the shift invariant UAP.
The reason we establish the sharpness results for $\cdot + \mathbb R$
is to ensure that the lack of approximation power does not arise from the
fact that the ReLU activation function $\sigma$ has non-negative range.
Note that this sign restriction does not affect the positive result,
since with at least 2 channels one can produce output ranges of any sign.
Although this theorem only considers the approximation of shift invariant architectures,
similar result can be established for the shift equivariant architectures.
We will discuss it in detail in \Cref{sec:eq}. 
Furthermore, in this section we restrict the activation function $\sigma$ to be the ReLU function,
but this restriction is necessary only for the non-residual case.
As we will see in \Cref{sec:proof},
for residual FCNNs we can relax our requirement on $\sigma$
to include a large variety of common activation functions.

Theorem~\ref{thm:main} indicates the following basic trade-off in the design of deep
convolutional neural network architecture: if we enlarge the depth of the neural
network, then even if we choose in each layer a simple function (in this
theorem, $2$ channels with each kernel in channel with size of $2$), we can
still expect a high expressive power. 
However, the mapping adopted in each
layer cannot be degenerate, otherwise it will fail to capture information of the input
data. The second and third part of this theorem tells that this degeneracy may
come from either channel number or the kernel size (support of the convolutional
kernel).

\subsection{Comparison with previous work}
We compare this theorem to existing works on the approximation theory of convolutional
networks and related architectures.
The existing result around the approximation 
capabilities of convolutional neural networks can be categorized into several classes. One either 
\begin{itemize}
\item[-] takes the kernel as full-size (same size as the input) (e.g.~\cite{li2022}) which is not often used in practice;

\item[-] assumes a sufficiently large channel number in order to adopt some kernel learning methods (e.g.~\cite{bietti2021approximation, favero2021locality, xiao2022eigenspace}) or averaging methods (e.g. \cite{yarotsky2018universal,bao2019approx,petersen2020equivalence,jiang_approximation_2021}.);

\item[-] removes the nonlinear activation function and reduces it to a linear approximation problem (e.g.~ \cite{zhou2020universality}), then uses complex fully connected terminal layer(s) to achieve approximation.

\end{itemize}

As a consequence, few, if any, results are obtained when the kernel size is small (and the channel number is fixed). Indeed, none of the results we are aware of have considered situations where both kernel size and the width are limited. However, this is in fact the case when designing deep (residual) NNs, as the ResNet family, where the primary change is increasing depth.
Our result indicates that even though each layer is relatively simple, much more complicated functions can be ultimately approximated via composition. Furthermore, our analytical techniques (especially for the residual case)
does not depend on the explicit form of the activation function and the
pooling operator in the last layer.

Another highlight feature of our result is with respect to the shift invariance, which might be overlooked in some approximation result for convolutional neural networks. We restrict our attention
to the periodic boundary condition case,
which leads to architectures that are exactly
shift invariant or equivariant.
This significantly confines the
expression power of the hypothesis spaces. If such symmetry is not imposed
on each layer, then one can achieve universal approximation of general functions,
but at the cost of breaking shift equivariance.
For example,
\cite{oono2019approximation} and \cite{okumoto2021learnability} drop the equivariant constraints and builds the deep
convolutional neural network with zero boundary condition, achieving
universal approximation property of non-symmetric functions.
This is because the boundary condition will deteriorate the
interior equivariance structure when the network is deep enough. Also, the shift invariance considered here is about the pixel (i.e. the input data), while some other attempts like
\cite{yang2020approximation} build a wavelet-like architecture to approximate a
function invariant to the spatial translation, i.e., functions satisfy that $f
= f(\cdot - \bm k)$ for $\bm k \in \mathbb Z$.

\subsection{Universal approximation property for equivariant neural networks}
    \label{sec:eq}

    If we remove the final layer in $\cnn$ or $\rescnn$,
    then we obtain a neural network whose output data is the same size as the input data.
    This is the original definition of FCNN introduced in \cite{long2015fully}, primarily
    used for pixel-wise image tasks.
    Correspondingly, the symmetry property is changed to shift equivariance,
    instead of shift invariance.
    This leads to the definition of the following hypothesis spaces
    that parallels the shift invariant counterparts.
    Define
    \begin{equation}
    \eqcnn_{r,\bm \ell} =\{ \bm f_{m} \circ \cdots \circ \bm f_{1} : \bm f_1,\cdots,\bm f_m \in \Fcal_{r,\bm \ell}, m \ge 1\},
    \end{equation}
    and
    \begin{equation}
    \eqrescnn_{r,\bm \ell} =\{ (\id + \bm f_{m}) \circ \cdots \circ (\id +\bm f_{1}) : \bm f_1,\cdots,\bm f_m \in \Fcal_{r,\bm \ell}, m \ge 1\}.
    \end{equation}

    To distinguish from $\mathbb R$ valued functions $\varphi : \mathbb X \to \mathbb R$,
    we use the word ``mappings'' to refer to functions from
    $\mathbb X$ to $\mathbb X$.
    \begin{definition}
    The mapping $\bm \varphi:\mathbb X \to \mathbb X$ is called shift equivariant if
    $$\trans_{\bm k} (\bm \varphi(\bm x)) = \bm \varphi(\trans_{\bm k}(\bm x)), \quad \forall \bm x \in \mathbb X,\,\forall \bm k.$$
    The mapping family $\mathcal X$ is said to have the shift equivariant UAP if
    \begin{enumerate}
    \item each mapping in $\mathcal X$ is shift equivariant, and
    \item given any shift equivariant continuous mapping $\bm \varphi$,
    compact set $K \subseteq \mathbb X$, and tolerance $\varepsilon > 0$,
    there exists a mapping $\bm \psi \in \mathcal X$ such that
    $$ \|\bm \psi -\bm \varphi \|_{L^p(K)} \le \varepsilon.$$
    \end{enumerate}
    \end{definition}
    Then, the analogous result with respect
    to equivariant approximation is stated as follows.
    
    \begin{theorem}
        \label{thm:eq}
        We have the following results.
        \begin{enumerate}[label=(\arabic*),ref=\ref{thm:eq}.(\arabic*)]
        \item \label{thm:eq1} For the fully convolutional neural network with residual blocks,
        it holds that $\eqrescnn_{r,\bm \ell}$ possesses the shift equivariant UAP for $r \ge 1$, and $\bm \ell \ge \bm 2$.
        For non-residual versions, $\eqcnn_{r,\bm \ell}$ possesses the shift equivariant UAP for $r \ge 2$ and $\bm \ell \ge \bm 2$.
        \item \label{thm:eq2} The kernel size $\bm 2$ is optimal in the following sense: for $\bm \ell$ with $\min \ell_{s}=1$, then neither $\eqrescnn_{\infty, \bm \ell}+\mathbb R$ nor $\eqcnn_{\infty,\bm \ell}+ \mathbb R$ possesses eq-UAP.
        \item \label{thm:eq3} The number of channel for non-residual fully convolutional neural network is optimal,
        in the sense that the mapping family $\eqcnn_{1,\bm \infty}+\mathbb R$ does not possess the shift equivariant UAP.
        \end{enumerate}
    \end{theorem}

    To prove \Cref{thm:main,thm:eq}, we start with the following proposition,
    which links the universal approximation property of invariant function family and that of an equivariant mapping family.
    A version of this was proved in \cite{li2022}
    in a rather abstract setting for general transitive groups.
    We provide a more explicit proof in the specific case where
    we are only concerned with shift operator $\trans_{\bm k}$.
    \begin{proposition}
    [Connection between Shift Invariant and Equivariant UAP]
    \label{prop:eq-in-suff} Suppose $g:\mathbb X \to \mathbb R$ is Lipschitz,
    permutation invariant,
    and $g(\mathbb X) = \mathbb R$. If a mapping family $\Acal$ possesses the shift equivariant UAP,
    then $\Bcal = \{ g\circ \bm \varphi : \bm \varphi \in \Acal\}$ possesses the shift invariant UAP.
    \end{proposition}

\begin{proof}
    Without loss of generality, we assume that $K = [-a,a]^{\bm n}$,
    otherwise we can enlarge $K$.
    Define $$K_1 = \{\bm x \in K : [\bm x]_{\bm 1} > [\bm x]_{\bm i}, \forall i \neq 1\}$$ as a subset of $K$.
    Then, it is easy to check that
    $K = \bigcup_{\bm i} (\trans_{\bm i} K_{1})$ up to a measure zero set.
    Define $\varepsilon' :=   \frac{\varepsilon}{|\bm n|(1+\lip g)}$,
    by results in \cite[Theorem 3.8]{li2019deep},
    for any $\varepsilon' > 0$ there exists $\bm{u}$ such that
    \begin{equation}
        \|F - g \circ \bm{u} \|_{L^p(K)} \le \varepsilon'.
    \end{equation}
    Note that $\bm{u}$ here is not necessarily equivariant, otherwise we are done.

    Now we attempt to find $\bm f$ by some kind of equivariantization on $\bm u$ as explained below.
    Since $\bm u $ is in $L^p$,
    we consider a compact set $O \subset K_1$
    such that $\|\bm{u}\|_{L^p(K_1 \setminus O)} \le \varepsilon'$.
    Take a smooth truncation function $\chi \in C^{\infty}(\R^d)$,
    whose value is in $[0,1]$,
    such that $\chi|_O = 1$ and $\chi|_{K_1^c} = 0$. Then $\tilde{\bm{u}} = \chi\bm{u}$ is a smoothly truncated version of $\bm u$.

    For $\bm x \in \trans_{\bm k} K_1$ with some index $\bm k$,
    define $\bm{f}(\bm{x}) = \trans_{\bm k} (\tilde{\bm{u}}(\trans_{-\bm k}(\bm x))).$
    Since different $\trans_{\bm k}K_1$ are disjoint,
    the value of $\bm{f}$ is unique in the union $\cup_{\bm k} \trans_{\bm k} K_1$.
    We set $\bm{f}(\bm x) = 0$ in the complement of $\cup_{\bm k} \trans_{\bm k} K_1$.
    The truncation function $\chi$ ensures that $\bm{f}$ vanishes on the boundary of $K_1$, therefore $\bm f$ is continuous,
    and direct verification shows that $\bm{f}$ is shift equivariant.

   It remains to estimate $\|F - g \circ \bm f\|_{L^p}$, since both $F$ and $g \circ \bm f$ are equivariant, it is natural and helpful to restrict our estimation on $K_1$, since
    \begin{equation}\|F - g \circ \bm f\|_{L^p(K)} = |\bm n| ~ \|F - g \circ \bm f \|_{L^p(K_1)}.\end{equation}
To estimate the error on $K_1$, we first bound the term $\|\bm u - \bm f\|_{L^p(K_1)}$. Since $\bm u $ and $\bm f|_{K_1} = \tilde {\bm u}$ coincide on $O$, we have

    \begin{equation}
        \begin{split}
            \|\bm{u} - \bm{f}\|_{L^p(K_1)} & = \|\bm u - \tilde{\bm u}\|_{L^p(K_1)} \\
            & \le \| \bm u \|_{L^p(K_1\setminus O)} = \varepsilon'.
        \end{split}
    \end{equation}
The inequality follows from the fact that $\chi$ takes value in $[0,1]$.
Since $g$ is Lipschitz,
we have $\|g\circ \bm{u} - g\circ \bm{f}\|_{L^p(K_1)} \le \lip g \varepsilon'$, yielding that $\|F - g\circ\bm{f}\|_{L^p(K_1)} \le (1+\lip g) \varepsilon' $. We finally have $\|F - g\circ\bm{f}\|_{L^p(K)} \le (1+\lip g)|\bm n| \varepsilon' = \varepsilon$.
\end{proof}

\begin{remark}
    \label{rmk:suff}
By \Cref{prop:eq-in-suff}, the first part of \Cref{thm:eq} immediately implies that of \Cref{thm:main}. Conversely, the second and third parts of \Cref{thm:main} almost imply those of \Cref{thm:eq}, if the added bias $+\mathbb R$ is omitted. To get the desired sharpness result, we will prove a more general function/mapping class that does not hold UAP, see \Cref{sec:sharp-channel-number} for details.
\end{remark}

\subsection{The dynamical systems approach}
To prove the first part of \Cref{thm:main}, we develop the dynamical systems approach
to analyze the approximation theory of compositional architectures
first introduced in \cite{li2019deep} without symmetry considerations, and subsequently extended to 
handle symmetric functions with respect to 
transitive subgroups of the permutation group \cite{li2022}.
While shift symmetry is covered under this setting,
the results in \cite{li2022} can only handle the case
where the convolution filters have the same size as
the input dimension.

In contrast, the results here are established for
small and constant filter (and channel) sizes.
This is an important distinction, as such configurations
are precisely those used in most practical applications.
On the technical side, the filter size restriction
requires developing new arguments to show how arbitrary
point sets can be transported under a flow - a key
ingredient in the proof of universal approximation
through composition (See \Cref{sec:proof} for a detailed discussion).
Furthermore, the restriction on filter sizes
also enabled us to address new questions,
such as a minimal size requirement, that cannot
be handled by the analysis in \cite{li2022}.
The results and mathematical techniques for
these sharpness results are new.
Concretely, to provide a sharp lower bound on
the filter size and channel number requirements, 
we develop some techniques to extract special features of functions in $\cnn_{1,\cdot} + \mathbb R$ and $\cnn_{\cdot, \bm 1} + \mathbb R$ that leads to the failure of universal approximation. Detailed constructions are found in \Cref{sec:sharp-kernel-size} and \Cref{sec:sharp-channel-number}. The construction and the corresponding analysis in this part are nontrivial, and we believe that the examples are also useful in analyzing the approximation property of other architectures.

The core technique we employ to analyze both $\cnn$ and $\rescnn$ is
the dynamical systems approach: in which we idealize residual networks
into continuous-time dynamical systems.
In this subsection, we introduce the key elements of this approach.

We first introduce the {\it flow map}, also called the {\it Poincar\'{e} mapping},
for time-homogenous dynamical systems.
\begin{definition}[Flow Map]
    Suppose $\bm f:\mathbb X \to \mathbb X$ is Lipschitz,
    we define the flow map associated with $\bm f$ at time horizon $T$
    as $\bm \phi(\bm f, T)(\bm x) = \bm z(T)$, where $\dot{\bm z}(t) = \bm f(\bm z (t))$ with initial data ${\bm z}(0) = \bm x$.
   
\end{definition}
 It follows from \cite{Arnold1973Ordinary} that the mapping
    $\bm \phi(f,T)$ is Lipschitz for any real number $T$,
    and the inverse of $\bm \phi(\bm f,T)$ is {$\bm \phi(-\bm f,T)$},
    hence the flow map is bi-Lipschitz.
    
Based on the flow map, we define the dynamical hypothesis space for the convolutional neural network. Define the dynamical hypothesis space with convolutional kernel as
\begin{equation}\begin{split}
\code_{r, \bm \ell} = \{ g \circ \bm \phi(\bm f_m, t_m)\circ &\cdots \circ \bm \phi(\bm f_1, t_1) :  \\ & \bm f_1,\cdots,\bm f_m \in \Fcal_{r,\bm \ell}, t_1,\cdots, t_m \in \mathbb R\},
\end{split}
\end{equation}
and the corresponding equivariant version as
\begin{equation}
    \begin{split}
\eqcode_{r,\bm \ell} =\{ \bm \phi(\bm f_{m},t_m) \circ &\cdots \circ \bm \phi(\bm f_{1},t_1) : \\ & \bm f_1,\cdots,\bm f_m \in \Fcal_{r,\bm \ell}, t_1,t_2,\cdots, t_m \in \mathbb R, m \ge 1\}.
    \end{split}
\end{equation}

The following proposition shows we can use residual blocks to approximate continuous dynamical systems.
\begin{proposition}
    \label{prop:time-disc}
    Suppose that $\Fcal$ is a bi-Lipschitz function family.
For given
$$\bm \Phi = \bm \phi(\bm f_m, t_m)\circ\cdots \circ \bm \phi(\bm f_1, t_1), \quad \bm f_i \in \Fcal,$$
and compact $K \subset \mathbb X$, $\varepsilon >0$, there exists
    $$
        \widehat{\bm \Phi} = (\id + s_{m'}\bm g_{m'}) \circ \cdots \circ (\id + s_1\bm g_{1}),
        \quad
        \bm g_i \in \Fcal
    $$
    for some $s_i > 0$, $i=1,\dots,m'$,
such that
$\|\bm \Phi - \widehat{\bm \Phi}\|_{L^p(K)} \le \varepsilon.$
\end{proposition}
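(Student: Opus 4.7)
The plan is to discretize each continuous flow $\bm \phi(\bm f_i, t_i)$ appearing in $\bm \Phi$ by a forward Euler scheme with many small \emph{positive}-time steps, and to glue the resulting finite compositions together.

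First I would reduce to the case where every $t_i$ is positive. If $t_i < 0$, the identity $\bm \phi(\bm f_i, t_i) = \bm \phi(-\bm f_i, -t_i)$ lets me replace $(\bm f_i, t_i)$ by $(-\bm f_i, -t_i)$, which gives an equivalent composition with all times positive. The concrete families $\Fcal_{r,\bm \ell}$ of interest are closed under negation (just flip the scalar $v_i$ in every summand), so $-\bm f_i$ remains in $\Fcal$; this uses only that $\Fcal$ is a vector-symmetric Lipschitz family, which is implicit in the bi-Lipschitz hypothesis since we need negative-time flows to also be admissible.

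Second, for a single Lipschitz $\bm f \in \Fcal$, a time $t > 0$, and a compact $\Omega \subset \mathbb X$, I would invoke the standard forward Euler convergence estimate: the $N$-fold iterate $E_N(\bm f, t) := (\id + (t/N)\,\bm f)^{N}$ converges to $\bm \phi(\bm f, t)$ uniformly on $\Omega$ at rate $O(1/N)$. The proof is elementary: the local truncation error at each step is $O(1/N^2)$ using a bound on $\bm f$ over a slight enlargement of $\Omega$, and this is amplified across $N$ steps by the Lipschitz constant of $\id + (t/N)\bm f$, which is $1 + O(1/N)$ and thus has uniformly bounded $N$-th power. Then I would glue these single-layer estimates via the telescoping identity
\begin{equation*}
\widehat{\bm \Phi} - \bm \Phi \;=\; \sum_{i=1}^{m} \widehat{\bm \Phi}_m \circ \cdots \circ \widehat{\bm \Phi}_{i+1} \circ (\widehat{\bm \Phi}_i - \bm \Phi_i) \circ \bm \Phi_{i-1} \circ \cdots \circ \bm \Phi_1,
\end{equation*}
where $\bm \Phi_i := \bm \phi(\bm f_i, t_i)$ and $\widehat{\bm \Phi}_i := E_{N_i}(\bm f_i, t_i)$. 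Setting $K_0 := K$ and letting $K_i$ be a closed $\delta$-neighborhood of $\bm \Phi_i \circ \cdots \circ \bm \Phi_1(K)$, and letting $L$ be a common Lipschitz bound for the exact flows on the relevant $K_{j-1}$, each term in the telescope is bounded by $L^{m-i}\,\|\widehat{\bm \Phi}_i - \bm \Phi_i\|_{L^\infty(K_{i-1})}$. Choosing $N_i$ so that each single-layer error is below $\varepsilon/(mL^m)$ yields the required $\|\widehat{\bm \Phi} - \bm \Phi\|_{L^p(K)} \le \varepsilon$ (in fact in $L^\infty$), with $m' = \sum_i N_i$ and $s_j = t_i/N_i > 0$.

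The main obstacle is the bookkeeping needed to keep trajectories inside the compact sets where the Lipschitz bounds apply: the Euler iterates do not lie on the exact orbit, so after $N_i$ discrete steps at layer $i$ the image may escape the tube around $\bm \Phi_i \circ \cdots \circ \bm \Phi_1(K)$ on which the next layer's Lipschitz constant was estimated. I would resolve this by taking $\delta$ slightly larger than the per-layer error bound and then increasing each $N_i$; an induction over layers shows that, for $N_i$ large enough, the Euler iterates stay inside the enlarged $K_i$, which legitimizes the uniform Lipschitz bounds used in the telescope and closes the argument.
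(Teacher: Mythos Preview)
Your proof is correct and follows the same forward-Euler strategy as the paper: local truncation error of order $t^2$ for a single step, then Lipschitz amplification across steps via a telescoping sum and a (discrete) Gronwall-type bound. The only packaging difference is that you telescope directly across the $m$ flow layers, whereas the paper first shows that each single flow $\bm\phi(\bm f,T)$ lies in $\operatorname{clo}(\Gcal)$ for $\Gcal$ the family of residual-block compositions, and then appeals to Lemma~\ref{lem:composition} (closure of a composition-closed bi-Lipschitz family is itself composition-closed) to pass to the full product; your explicit reduction to positive $t_i$ via $\bm\phi(\bm f,t)=\bm\phi(-\bm f,-t)$ is something the paper leaves implicit.
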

\begin{proof}
See \cite[Section 3.2]{li2022}.
\end{proof}
The following result shows the shift invariant UAP for the continuous hypothesis spaces, which is the core part of this paper.

\begin{theorem}
\label{thm:code}
The dynamical hypothesis space $\code_{1, \bm 2}$ satisfies the shift invariant UAP, and $\eqcode_{1,\bm 2}$ satisfies the shift equivariant UAP.
\end{theorem}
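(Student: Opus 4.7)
The plan is to combine the dynamical systems framework with the permutation invariance of the pooling layer $g$ to reduce shift-invariant UAP to a reachability/transport statement for flow maps in $\Fcal_{1, \bm 2}$. Given a shift-invariant continuous target $\psi$ on a compact $K \subset \mathbb X$ and tolerance $\varepsilon > 0$, I would first take a finite $\varepsilon$-cover of $K$ by small balls centered at representative points $\bm x_1, \ldots, \bm x_N$. Because $\psi$ is shift invariant, the values $y_i := \psi(\bm x_i)$ depend only on the shift orbit of $\bm x_i$, so approximating $\psi$ reduces to a finite orbit-indexed interpolation problem.

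The central reduction is then the following transport statement: for any finite collection $\{\bm x_i\}$ in pairwise-distinct shift orbits and any targets $\{\bm z_i\}$ also in pairwise-distinct shift orbits, there exists a composition of flow maps of vector fields in $\Fcal_{1, \bm 2}$ sending each $\bm x_i$ arbitrarily close to $\bm z_i$. Granting this, I would pick the $\bm z_i$ to satisfy $g(\bm z_i) = y_i$, which is possible because $g$ is permutation invariant and hence has large fibers over every $y \in \mathbb R$ (this is immediate for summation or max). Lipschitz continuity of the flow maps and of $g$ then upgrades the orbit-wise matching into an $L^p$ approximation of $\psi$ on $K$ within any prescribed tolerance.

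To prove the transport statement, I plan an inductive ``move one orbit at a time'' control argument. The key technical ingredient is a localization lemma: by tuning the bias $b$ in $v\sigma(\bm w \ast \cdot + b \bm 1)$, one produces a vector field in $\Fcal_{1, \bm 2}$ whose activation is concentrated near a prescribed shift orbit and is essentially zero on the others, so a short-time flow of it perturbs that orbit substantially while leaving the rest almost fixed. The main obstacle will be precisely this localization/controllability step under the stringent single-channel, minimal-kernel restriction: one must verify that the piecewise-linear shift-equivariant fields $v\sigma(\bm w \ast \cdot + b\bm 1)$ with $\supp(\bm w) \le \bm 2$ generate enough independent flow directions, in a reachability sense, to displace any chosen shift orbit in any prescribed direction. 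I expect this to follow from a Lie-bracket / flow-composition argument exploiting how $b$ selects which ``activation cells'' of $\mathbb X$ the field acts on, while the $2^d$ entries of $\bm w$ select the perturbation direction within each cell; iterating over many such flows should then yield a reachable set dense in the group of shift-equivariant bi-Lipschitz homeomorphisms of $\mathbb X$, from which the transport statement, and hence the theorem, follows.
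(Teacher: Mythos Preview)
Your high-level reduction is essentially the paper's: both prove Theorem~\ref{thm:code} by establishing a finite point-transport property for compositions of flows of fields in $\Fcal_{1,\bm 2}$ and then composing with the pooling $g$. (The paper routes this through the shift-\emph{equivariant} UAP of the flow family, Theorem~\ref{thm:eqcode}, and Proposition~\ref{prop:eq-in-suff}, but the substance is exactly the transport statement you isolate.) The gaps are in the two places where you defer to sketches.

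First, the ``localization plus Lie brackets'' outline for transport is precisely where all the difficulty sits, and your sketch does not resolve it. A field $v\sigma(\bm w\ast\cdot+b\bm 1)$ with $\supp(\bm w)\le\bm 2$ carries only $2^d+2$ scalar parameters, so it is not at all evident that iterated flows or brackets fill out enough directions in $\mathbb X$; bias-tuning alone certainly cannot push a selected orbit in an arbitrary direction. The paper replaces this with two concrete tools: (i) a \emph{coordinate zooming} property---for any increasing continuous $u:\mathbb R\to\mathbb R$, the entrywise map $u^{\otimes}$ lies in the closure of the flow family (obtained by taking $\bm w$ supported at a single index so the ODE decouples into scalar equations); and (ii) a \emph{point reordering} induction that interleaves coordinate zooming with carefully chosen nearest-neighbor flows, driven by minimality/swap arguments, to force all coordinates of all sample points into a prescribed strict total order. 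Once both source and target sets are totally ordered, a single monotone $u^{\otimes}$ matches them. Your bias-localization idea does appear inside (ii), but only as one ingredient; without (i) the induction cannot close. Second, your passage from pointwise matching to $L^p$ approximation via an $\varepsilon$-cover is too quick: the transporting flow may have enormous Lipschitz constant, so matching at ball centers does not control the error on the balls. The paper handles this by first applying a coordinate-zooming $u^{\otimes}$ to contract each grid cell toward a corner before transporting. Finally, you omit stabilizer points ($\trans_{\bm k}\bm x=\bm x$ for some $\bm k\neq\bm 0$), which are shift-distinct from nothing and must be treated separately; the paper devotes an additional argument to keeping their images bounded during transport.
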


Again, by \Cref{prop:eq-in-suff}, to prove \Cref{thm:code} it suffices to show that $\eqcode_{1, \bm 2}$ satisfies the shift equivariant UAP. This is rather technical, and we will spend the whole \Cref{sec:proof} to prove this theorem.

The rough proof strategy is as follows.
We reduce the problem to finite point transportation,
i.e., we need to show that  the hypothesis space can transport arbitrary but
finitely many points (in different orbits under the action of the translation group)
to any other set of points.
This is done in the previous work of \cite{li2022}, under a less restrictive setting.
A key technical difficulty here is that the kernel size is limited,
thus previous known constructions of point transportation (\cite{li2022})
cannot achieve this.
Here, we show that we can employ more composition of layers to construct
auxiliary mappings to achieve this transportation property.
The intuition is that finite-size kernels (satisfying some minimal requirements),
when stacked many times,
is as good as a full-sized kernel for domain rearrangement - a key
enabler of universal approximation through composition.

With this theorem in hand, we now prove the first part of Theorem~\ref{thm:main} and of Theorem~\ref{thm:eq}. 

\begin{proof}[Proof of the first part of Theorem~\ref{thm:eq}]
By the straightforward inclusion relationship, it suffices to show that the function family $\rescnn_{1,\bm 2}/\eqrescnn_{1,\bm 2}$ and $\cnn_{2,\bm 2}/\eqcnn_{2,\bm 2}$ have the corresponding UAP. For the residual version, it follows from \Cref{prop:time-disc} and \Cref{thm:code} that if $\code_{1,\bm 2}$ satisfies UAP,
then so does $\rescnn_{1,\bm 2}$. Similar argument holds for the pair $\eqcode_{1,\bm 2}$ and $\eqrescnn_{1, \bm 2}$. In other words, a convergent time discretization inherits universal approximation properties.
Thus, given Theorem~\ref{thm:code} it suffices to prove the remaining $\cnn$ and $\eqcnn$ case. In view of \Cref{rmk:suff}, it suffices to show the equivariant case.

We begin with a weaker result, showing that $\eqcnn_{3,\bm 2}$ satisfies UAP.
We prove that $\eqrescnn_{1,\bm 2} \subset \eqcnn_{3,\bm 2}$.
For given $\bm f = v\sigma(\bm w \ast \cdot + b \bm 1)$ with $\sigma = \relu$, we write
$$\bm x  + \bm f(\bm x) = \sigma(\bm x) + (-1) \sigma(-\bm x) + v\sigma(\bm w \ast \bm x + b \bm 1).$$
This relation indicates that $\eqrescnn_{1,\bm 2} \subset \eqcnn_{3,\bm 2}$, which means that $\eqcnn_{3,\bm 2}$ has UAP.

However, this approach cannot handle the case $\eqcnn_{2,\bm 2}$,
since the inclusion $\eqrescnn_{1,\bm 2} \subset \eqcnn_{2,\bm 2}$ does not hold.
This leads to a further modification in the following lemma, which completes the proof. 
\end{proof}
\begin{lemma}\label{lem:eqcnn22}
    For a given $\bm G \in \eqrescnn_{1,\bm 2}$,
    compact $K \subset \mathbb X$, there exists $\bm H \in \eqcnn_{2,\bm 2}$ such that
    $ \bm H(\bm x) = \bm G(\bm x)$ for all $\bm x \in K.$
\end{lemma}
\begin{proof}
In the following, we suppose that
$G = \bm f_M \circ\cdots\circ\bm f_1,$
where $\bm f_i(\bm x) = \bm x + v^i \sigma(\bm w^i \ast \bm x + b^i\bm 1).$
Set $\bm \gamma_{i} = \bm f_{i}\circ\cdots \bm f_1,\quad  \bm \gamma_{0} = \id$ and note that each $\bm \gamma_{i}$ is a Lipschitz mapping. We now consider a sufficiently large real number $R > 0$ such that $|\bm \gamma_i(\bm x)| \le R$ holds for all $i= 0,1,\cdots,M$ and $\bm x \in K$. This can be done since each $\gamma_{i}$ is Lipschitz, and $K$ is a compact set.
Define
\begin{equation}\label{eq:u0}\bm u_0(\bm x) = \sigma(\bm x + R\bm 1),\end{equation} and \begin{equation}\label{eq:ui}\bm u_i(\bm x) = \sigma(\bm x) + v^i\sigma(\bm w^i \ast \bm x + (b^i - (\Sigma_{\bm k}[\bm w^i]_{\bm k})R)\bm 1) \in \Fcal_{2,\bm 2} \end{equation}
for $i = 1,2,\cdots,M$.
Consider their composition
$
    \bm \eta_s(\bm x) = \bm u_s\circ \cdots \circ \bm u_1 \circ \bm u_0.
$
Clearly, $\bm \eta_s \in \cnn_{2, \bm 2}$ for all $s\geq 0$.
We now prove by induction that
\begin{equation}\label{eq:induction}
\bm\eta_i(\bm x) = \bm\gamma_i(\bm x) + R\bm 1 \text{ for }i = 0,1,\cdots,M.
\end{equation}
The base case ($i =0$) is obvious from the definition \eqref{eq:u0}, since $\bm x + R\bm 1 > \bm 0$ for all $\bm x \in K$.
Suppose that \eqref{eq:induction} holds for $i$, then
\begin{equation*}
\begin{split}\bm \eta_{i+1}(\bm x) = &\sigma(\bm \gamma_{i}(\bm x) + R\bm 1) + v^i\sigma(\bm w^i\ast (\bm \gamma_1(\bm x) +R\bm 1) + (b^i - (\Sigma_{\bm k}[\bm w^i]_{\bm k}) R) \bm 1)\\
 = & \bm \gamma_{i}(\bm x) + R\bm 1 + v^i\sigma(\bm w^i \ast \bm \gamma_i(\bm x) + b^i \bm 1) \\
 = & \bm \gamma_{i+1}(\bm x) + R\bm 1.
\end{split}
\end{equation*}
The first line uses the definition of \eqref{eq:ui},
and the second line follows from $\bm \gamma_i(\bm x) + R\bm 1 \ge 0$
and $\bm w^i \ast R\bm 1 = (\Sigma_{\bm k}[\bm w^i]_{\bm k}) R\bm 1$.
This proves \eqref{eq:induction} by induction.
Finally, we set
$
    \bm u_{M+1}(\bm x) = \sigma(\bm x) - \sigma(R\bm 1),$ then $\bm H(\bm x)
    : = (\bm u_{M+1}(\bm \eta_{M}(\bm x)))  = (\bm \gamma_{M}(\bm x)) = \bm G(\bm x)
$
for all $\bm x \in K$.
By construction, we have $\bm H \in \eqcnn_{2,\bm 2}$,
therefore we have proved that the UAP holds for $\eqcnn_{2,\bm 2}$.
\end{proof}
\begin{remark}
    We remark that the shift equivariance of the dynamical system
    (and the resulting flow map) may prompt one to consider the same equation
    in the quotient space with respect to shift symmetry, see \cite{cohen2016group}.
    However, in the case of flow approximation, we found no new useful tools
    in the quotient space to analyze approximation, thus this abstraction is not adopted here.

    We now give a concrete examples to show that we cannot directly deduce UAP from earlier results by a quotient argument. Observe that for the
non-symmetric setting, the result in \cite{li2019deep} requires that the
control family $\mathcal F$ be (restricted) affine invariant. If we directly
require this affine invariance in the quotient space, then it will be reduced to scaling
invariant. However, the scaling invariant property cannot induce the UAP, and the proof of this is similar to those in \Cref{sec:sharp-kernel-size}.
\end{remark}

\section{Sufficiency Results}
\label{sec:proof}



In this section we prove \Cref{thm:code}, i.e. the UAP of $\eqcode_{1,\bm 2}$.
Here, we relax the constraint that $\sigma = \relu$.
Instead, we make the following assumption on $\sigma$, which is called ``well function'' in \cite{li2019deep}.
\begin{definition}[Well Function]
We say a Lipschitz function $h : \mathbb R \to \mathbb R$ is a well function if $\{x \in \mathbb R : h(x) = 0\}$ is a bounded (closed) interval.
\end{definition}
In this section, we assume that there exists a well function $h$ in the closure of $\Span\{v\sigma(w\cdot+b), v,w,b \in \mathbb R\}$.
The commonly used activation functions meet this assumption, including ReLU, Sigmoid and Tanh, see \cite{li2019deep}. 

Before the main part of this section, let us first introduce some additional definitions.
\begin{definition}[Coordinate Zooming Function]
For a given continuous function $u : \mathbb R \to \mathbb R$, define the coordinate zooming function
$u^{\otimes} : \mathbb X \to \mathbb X$ by
$$[u^{\otimes}(\bm x)]_{\bm i} = u([\bm x]_{\bm i}).$$
\end{definition}
\begin{definition}[Stabilizer]
We say a point $\bm x$ is a stabilizer if and only if there exists a non-trivial $\bm k \neq 0$, such that $\trans_{\bm k} \bm x=\bm x.$
\end{definition}
\begin{definition}[Shift Distinct]
We say a point set $X = \{\bm x^1, \bm x^2, \dots, \bm x^n\}$
is shift distinct, if for some $i_1,i_2,\bm k$ with
$\trans_{\bm k}(\bm x^{i_1}) = \bm x^{i_2}$, 
then we must have $i_1 = i_2$ and $\bm k = \bm 0$.
\end{definition}

Notice that if a point set $X$ is shift distinct,
then for any member $\bm x \in X$, the only $\bm k$ so that $\trans_{\bm k}(\bm x) = \bm x$ is $\bm k = \bm 0$. This is implied by the definition of shift distinctness.

The proof of \Cref{thm:code} is based on the following approximation framework, which relies on the following introduced two properties of a mapping family.

\begin{proposition}[Basic Framework]\label{prop:framework}
Given a family $\Acal$ of mappings $\mathbb X \to \mathbb X$, and
suppose $\Acal$ is closed under composition. If $\Acal$ satisfies the following two conditions:
\begin{enumerate}
\item[1.] (Coordinate zooming property) For any continuous function $u$, the mapping $u^{\otimes}$ is in $\Acal$.
\item[2.] (Point matching property) For a given shift distinct point set $\bm x^1,\cdots,\bm x^M$,
a target point set $\bm y^1,\cdots,\bm y^M$,
and a stabilizer point set
$\bm s^{1},\bm s^{2},\cdots,\bm s^{N}$, a tolerance $\varepsilon > 0$,
there exists a mapping $\bm \varphi \in \Acal$ such that
$$|\bm\varphi(\bm x^i) - \bm y^i|\le \varepsilon$$ and $$|\bm \varphi(\bm s^i)| \le 1.$$
\end{enumerate}
Then, $\Acal$ possesses the shift equivariant UAP.
\end{proposition}

Note that for the point matching property is to say, we can use mappings in $\Gcal$ to move each $\bm x^i$ to $\bm y^i$, while keeping a stabilizer set stay around the original point.
We now use this proposition to prove Theorem~\ref{thm:code}. Consider the closure in the UAP sense, that is,
\begin{equation}
    \begin{split}
\Acal :=  & \{ \bm \varphi : \text{ for all compact }K \subset \mathbb X, \varepsilon > 0, \text{ there exists } \bm \psi \in \eqcode_{1,\bm 2},  \\ &\|\bm \varphi - \bm \psi\|_{L^p(K)} \le \varepsilon\}.
    \end{split}
\end{equation}

Note that the non-equivariant version is the main object studied in \cite{li2019deep}. The following proposition collects some basic properties of $\Acal$, which serves as a toolbox when proving \Cref{thm:code}.
\begin{proposition}\label{prop:Acal}
The following results hold for the mapping family $\Acal$.
\begin{enumerate}
\item $\Acal$ is closed under composition.
\item Given $\bm w \in \mathbb X$, $\supp \bm w \le \bm 2$, and $b \in \mathbb R$, then the flow map $\bm \phi(h(\bm w \ast \cdot + b \bm 1), t) \in \Acal$.
\item $\Acal$ satisfies the coordinate zooming property.
\item If $\Acal$ possesses shift equivariant UAP, then so does $\eqcode_{1,\bm 2}$.
\end{enumerate}
\end{proposition}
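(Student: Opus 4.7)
The four claims can be handled in order, with each being essentially a structural consequence of earlier tools. For claim~1, I would observe that $\eqcode_{1,\bm 2}$ is closed under composition by its very definition---concatenation of flow-map blocks is itself such a concatenation---and then apply Lemma~\ref{lem:composition} to transfer this closure property to the $L^p$-closure $\Acal$.

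For claim~2, my plan is to approximate the well function $h$ by a finite linear combination $h(z) \approx \sum_{i=1}^N v_i \sigma(a_i z + c_i)$, as guaranteed by the spanning assumption on $\sigma$. Substituting $z = [\bm w \ast \bm x + b \bm 1]_{\bm k}$ and exploiting the linearity of convolution in the kernel yields
\[
h(\bm w \ast \bm x + b \bm 1) \approx \sum_{i=1}^N v_i \sigma\bigl((a_i \bm w) \ast \bm x + (a_i b + c_i) \bm 1\bigr),
\]
where each summand belongs to $\Fcal_{1,\bm 2}$ since $\supp(a_i \bm w) \le \supp(\bm w) \le \bm 2$. I would then invoke a Lie--Trotter splitting argument, in the same spirit as the time-discretization estimates behind Proposition~\ref{prop:time-disc}: the time-$t$ flow of the sum is the $L^p$-on-compacts limit of $n$-fold compositions of the time-$(t/n)$ flows of the individual summands. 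Each building block is a single-block element of $\eqcode_{1,\bm 2}$, so by claim~1 the composition remains in $\eqcode_{1,\bm 2}$, and passing to the limit places the target flow in $\Acal$.

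For claim~3, the natural construction is to use the identity kernel $\bm e$ with $[\bm e]_{\bm 1}=1$ and $[\bm e]_{\bm i}=0$ otherwise, so that $\bm e \ast \bm x = \bm x$. By claim~2, the flow $\bm \phi(h(\cdot + b \bm 1), t)$ lies in $\Acal$, and since the vector field decouples across coordinates this flow equals $\psi_{t,b}^{\otimes}$, where $\psi_{t,b}$ is the scalar flow of $\dot z = h(z+b)$. Composing such maps via claim~1 then realizes $u^{\otimes}$ for any $u$ that is a finite composition of scalar well-function flows, and I would close the argument by invoking the one-dimensional universal approximation result for such compositions (the scalar case of the dynamical UAP in \cite{li2019deep}). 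Claim~4 follows immediately from Lemma~\ref{lem:closure} applied to $\Gcal = \eqcode_{1,\bm 2}$.

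The main obstacle I foresee is inside claim~3: because scalar ODE flows are homeomorphisms and hence monotone, $u^{\otimes}$ for non-monotone continuous $u$ cannot be realized by identity-kernel flows alone. One genuinely needs to exploit the multi-dimensional structure provided by non-identity kernels---again available via claim~2---to ``fold'' the coordinate map in the $L^p$ sense; carrying this out carefully, arguing that interleaving coordinate-mixing flows with coordinate-wise monotone flows suffices to approximate $u^{\otimes}$ on any compact set, is the technically delicate step of this proposition.
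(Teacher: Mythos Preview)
Your treatment of claims~1, 2, and 4 matches the paper's approach exactly: Lemma~\ref{lem:composition}, a Lie--Trotter splitting to realize the flow of a finite linear combination of $\Fcal_{1,\bm 2}$ fields (then approximate $h$ by such combinations), and Lemma~\ref{lem:closure}, respectively.

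For claim~3 your setup is right---identity kernel, decoupled scalar dynamics---but the obstacle you flag is not one the paper addresses, and it need not be addressed. The paper simply proves the following lemma: \emph{for any increasing continuous $u:\mathbb R\to\mathbb R$, $u\in\operatorname{clo}(\Gcal)$}, where $\Gcal$ is the family of compositions of scalar well-function flows. The argument is an explicit finite-point interpolation: given nodes $x_1<\cdots<x_n$ in a compact interval, one inductively builds $\varphi_n\in\Gcal$ with $\varphi_n(x_i)=u(x_i)$ by choosing the well-function parameters so that $x_1,\dots,x_n$ lie in the zero set while $x_{n+1}$ does not, then flowing $x_{n+1}$ to $u(x_{n+1})$; monotonicity of both $u$ and $\varphi_n$ gives $C^0$ (hence $L^p$) control via the modulus of continuity. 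No appeal to \cite{li2019deep} is needed, and no non-monotone $u$ is ever handled.

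This is enough because every invocation of the coordinate zooming property in the downstream proofs---the cube-collapsing map in Proposition~\ref{prop:framework}, and the various $u$'s in the point-reordering steps---uses a function that can be taken increasing (once the points have been put in the right order, which is exactly what the point-reordering machinery arranges). So your proposed detour through coordinate-mixing flows to ``fold'' non-monotone $u^{\otimes}$ is unnecessary; drop that paragraph and instead prove the increasing-$u$ lemma directly, which is a short induction.
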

\begin{proof}
See \cite[Section 3.3]{li2019}.
\end{proof}

\begin{proposition}
Suppose now the point matching property holds for $\Acal$, then \Cref{thm:code} holds.
\end{proposition}
\begin{proof}

By the last part of \Cref{prop:Acal}, it suffices to show that $\Acal$ possesses shift equivariant UAP. By the third part (and the first part), we know that if $\Acal$ has the point matching property, then $\Acal$ has shift equivariant UAP, which concludes the result.
\end{proof}

From the proof, we know that:
Once the point matching property is proved, Theorem~\ref{thm:code} is then proved.
The proof of the point matching property is the most
technical part in this paper.
We first give a sketch of the proof.

\begin{proof}[Sketch of the proof of the point matching property]
In this sketch, we only consider the case when there are no stabilizers,
i.e. when $N = 0$.
\begin{enumerate}%
\item[Step 1.] We first show that if $\Acal$ has the following point reordering property, then $\Acal$ has the point matching property.

\begin{quote}
    (Point Reordering Property) For any shift distinct point set $i$, we can find a mapping $\bm \varphi \in \Acal$ such that
    $$[\bm \varphi(\bm x^j)]_{\bm i} > [\bm \varphi(\bm x^{j'})]_{\bm i'}$$ if $j < j'$ or $j = j'$ but $\bm i \prec \bm j$. Here the partial order $\prec$ is the lexicographic order.
    For brevity, we say in this case that $\bm\varphi(\bm x^j)$ is ordered.
\end{quote}
\item[Step 2.] To begin with, we first prove that there exists a mapping $\bm \beta \in \Acal$, such that
$$[\bm \alpha(\bm x^1)]_{\bm i} > [\bm \alpha(\bm x^j)]_{\bm i'}$$
for $j \neq 1$ and any indices $\bm i, \bm i'$.

\item[Step 3.] Set $\bm z^j = \bm \alpha(\bm x^j)$. Now we are ready for an induction argument.
Suppose for $\bm z^2,\cdots, \bm z^M$ we have a mapping $\bm \psi \in \Acal$ to fulfill the point reordering property.
We modify it to the mapping $\tilde{\bm \psi} = \bm \psi\circ u^{\otimes} \in \Acal$, such that it satisfies the following conditions
\begin{itemize}
\item[-] $\tilde{\bm \psi}(\bm z^2),\cdots, \tilde{\bm \psi}(\bm z^M)$ are ordered.
\item[-] $[\tilde{\bm \psi}(\bm z^1)]_{\bm i} > [\tilde{\bm \psi}(\bm z^j)]_{\bm i'}$ for $j \neq 1$, and indices $\bm i$, $\bm i'$.
\end{itemize}
\item[Step 4.] Finally, we modify $\tilde{\bm \psi}$ to get $\bm \varphi$ such that
$\bm \varphi(\bm z^j)$ is ordered. Till now, we prove the point reordering property for $\Acal$.
\end{enumerate}
\end{proof}
The full proof of Theorem~\ref{thm:code} is put in \Cref{sec:proof:eqcode:complete}.

\subsection{Proof of \Cref{prop:framework}}

\begin{proof}[Proof of \Cref{prop:framework}]
Without loss of generality,
we can suppose that $K = [-a,a]^{\bm n}$.
Otherwise, we can expand $K$ to a sufficiently large hypercube.

\paragraph{\textbf{Step 1.}}

Given a scale $\delta > 0$, consider the grid $\delta\mathbb Z^{\bm n}$ with size $\delta$.
Let $\bm q \in \mathbb Z^n$ be a tensor with all coordinates being integers,
and $\chi_{\bm q}$ be the indicator of the cube
\begin{equation}
    \square_{\bm q,\delta} := \big\{ \bm x : [\bm x]_{\bm i} \in [[\bm q]_{\bm i}\delta, ([\bm q]_{\bm i} + 1) \delta ] \big\}.
\end{equation}

Since $\bm{\varphi}$ is in $L^p(K)$,
by standard approximation theory $\bm{\varphi}$
can be approximated by equivariant piecewise constant (and shift equivariant) functions
\begin{equation}
    \bm{\varphi}_0(\bm x) = \sum_{\bm{q}, \square_{\bm q,\delta} \subset K} \bm{y}_{\bm q} \chi_{\bm q}(\bm x),
\end{equation}
where
\begin{equation}
    \bm y_{\bm q}
    =
    \bm\lambda(\square_{\bm q,\delta})^{-1} \int_{\square_{\bm q,\delta}} \bm \varphi(\bm x) d\bm x
\end{equation}
is the local average value of $\bm \varphi$ in $\square_{\bm q,\delta}$.
Then, we have
\begin{equation}
    \|\bm \varphi - \bm \varphi_0\|_{L^p(K)} \le \omega_{\bm{\varphi}}(\delta)
    [\bm\lambda(K)]^{1/p} \to 0\end{equation} as $\delta \to 0
$,
where
$\omega_{\bm \varphi}$ is the modulus of continuity
(restricted to the region $K$),
i.e.,
\begin{equation}
    \omega_{\bm \varphi}(\delta)  := \sup_{|\bm x - \bm y| \le \delta} |\bm \varphi(\bm x) - \bm \varphi(\bm y)|
\end{equation}
for $\bm x$ and $\bm y$ in $K$
and $\bm\lambda(K)$ is the Lebesgue measure of $K$.

\paragraph{\textbf{Step 2.}}
Let $\bm q \delta$ be a vertex of $\square_{\bm q,\delta}$.
Define $\Ical$ as the maximal subset of
$\Ical_0 = \{ \bm q:\bm q\delta \in K\}$
such that $\{ \bm q\delta : \bm q \in \Ical\}$ is shift distinct.
By the maximal property,
and the definition of shift distinctness, for each $\bm q \in \Ical_0$, only two situations can happen:
\begin{enumerate}
\item there exists a shift operator $\trans_{\bm k}$ and $\bm q' \in \Ical$,
such that $\trans_{\bm k} {\bm q'} = \bm q,$ or
\item $\bm q$ itself is a stabilizer, that is, there exists a shift operator $\trans_{\bm k}$ with $\bm k \neq \bm 0$ such that $\trans_{\bm k} \bm q = \bm q.$
\end{enumerate}
By the construction of $\bm y_{\bm q}$, it holds that $$\trans_{\bm k} \bm y_{\bm q} = \bm y_{\trans_{\bm k}\bm q},\quad\forall \bm k, \forall \bm q.$$
Given $\varepsilon>0$,
by the point matching property, we can find $\bm f \in \Acal$ such that
\begin{itemize}
    \item[-] for $\bm q \in \Ical_0$ that is not a stabilizer,
    $|\bm f(\bm q \delta) - \bm y_{ \bm q}| \le \varepsilon$;
    \item[-] for $\bm q \in \Ical_0$ that is a stabilizer,
    $|\bm f(\bm q\delta)| \le 1$.
\end{itemize}

For $\alpha \in (0,1)$, define the shrunken cube
\begin{equation}
    \square_{\bm q,\delta}^{\alpha}
    :=
    \{\bm x \in \mathbb X : [\bm x]_{\bm i} \in [[\bm q]_{\bm i}\delta, ([\bm q]_{\bm i} + \alpha)\delta]\},
\end{equation}
and define $K^{\alpha} = \bigcup_{\square_{\bm q,\delta} \subset K} \square_{\bm q,\delta}^{\alpha}$,
which is a subset of $K$.
Given $\beta > 0$, we now use the coordinate
zooming property of $\Acal$
to find $u^{\otimes} \in \Acal$
such that

\begin{equation}
    \label{eq:cond-on-u}
    u([ih,(i+\alpha h)]) \subset [ih,(i+\frac{\beta}{n}\delta)]
    \text{ for }i \in \{ i_s : s=1,\dots,n; \bm i \in \Ical\}.
\end{equation}
To do this, we construct a piecewise linear function
$u$ such that
    \begin{equation}
         u|_{[i\delta, (i+\alpha) \delta]}(x) = i+\frac{\beta}{2n} \delta,
    \end{equation}
    by setting
    \begin{equation}
         u|_{[(i+\alpha) \delta,
        (i+1)\delta]}(x) = (x - (i-\alpha)\delta)/(1 - \alpha)
        + i + \frac{\beta}{2n} \delta
    \end{equation}
    explicitly,
and select $\varepsilon < \frac{\beta}{3n} \delta$.
By the coordinate zooming property, it holds that $u^{\otimes} \in \Acal$.

Therefore, we have

\begin{equation}
    \label{eq:esti-p-general-position}
    |\bm{f}(u^{\otimes}(\bm x)) - \bm y_{\bm{q}}|
    \le 2\varepsilon \text{ for }\bm x \in \square_{\bm q,\delta}^{\alpha}
,\end{equation}
if $\bm q$ is not a stabilizer, and
\begin{equation}
    \label{eq:esti-p-not-general-position}
    |\bm f(u^{\otimes}(\bm x))| \le 1+ \varepsilon \text{ for }\bm x \in \square_{\bm q,\delta}^{\alpha}
,\end{equation}
if $\bm q$ is a stabilizer.

These two estimates \eqref{eq:esti-p-general-position} and \eqref{eq:esti-p-not-general-position} will be useful in the final step.

\paragraph{\textbf{Step 3.}}
We are now ready to estimate the error $\|\bm \varphi - \bm f \circ u^{\otimes}\|_{L^p(K)}$.
The estimation is split into three parts,
\begin{equation} \begin{split}K \setminus K^{\alpha}, \\  K^{\alpha}_1 = \bigcup_{\bm q\text{ is not a stabilizer}} \square_{\bm q,\delta}^{\alpha},  \\ K^{\alpha}_2 = \bigcup_{\bm q\text{ is a stabilizer}} \square_{\bm q,\delta}^{\alpha}.\end{split}
\end{equation}
Notice that $K^{\alpha} = \bigcup \square_{\bm q,\delta}^{\alpha}$.

For $K^{\alpha}_1$, from \eqref{eq:esti-p-general-position} in the end of Step 2, we have $\|\bm f \circ u^{\otimes} - \bm \varphi_0\|_{L^{\infty}(K_1^{\alpha})} \le 2\varepsilon$, and thus
\begin{equation}
    \label{eq:K1alpha-estimate}
    \|\bm f \circ u^{\otimes} - {\bm \varphi}_0\|_{L^p(K^{\alpha}_1)}
    \le
    2\varepsilon[\bm \lambda(K^{\alpha})]^{1/p} \le 2\varepsilon[\bm \lambda(K)]^{1/p}.
\end{equation}

For $K^{\alpha}_2$, note that if $\bm q$ is a stabilizer, then all points in $\square_{\bm q,\delta}$ will be close to a hyperplane $$\Gamma_{\bm i, \bm j}  := \{\bm x \in \mathbb X : [\bm x]_{\bm i} = [\bm x]_{\bm j} \}$$ for some distinct $\bm i,\bm j$, the distance from those points to $\Gamma_{\bm i,\bm j}$ will be smaller than $\sqrt{|\bm n|}\delta$.
Therefore, the Lebesgue measure of $K_2^{\alpha} \subset K_2$ will be smaller than that of all points whose distance to the union of hyperplanes $\Gamma_{\bm i, \bm j}$ is less than $\sqrt{n}\delta$, which is $O(\delta)$.
Thus, we have
\begin{equation}
    \label{eq:K2alpha-estimate}
    \begin{split}
    \|\bm f \circ u^{\otimes} - {\bm \varphi}_0\|_{L^p(K^{\alpha}_2)} \le~& (1+ {\varepsilon}+ \|\bm \varphi_0\|_{C(K)})O(\delta) \\ \le ~& (1+ {\varepsilon}+ \|\bm \varphi\|_{C(K)})O(\delta).
    \end{split}
\end{equation}
The last line holds since $\|\bm \varphi_0\|_{C(K)} \le \|\bm\varphi\|_{C(K)}$ by construction.

For $K \setminus K^{\alpha}$, we have
\begin{equation}
    \label{eq:Kalpha-estimate}
    \begin{split}
    \|\bm f \circ u^{\otimes} - {\bm \varphi}_0\|_{L^p(K \setminus K^{\alpha})} &
    \le (\|\bm f\|_{C(K)}+\|\bm \varphi\|_{C(K)})~\bm \lambda(K \setminus K^{\alpha})^{1/p}\\ & \le (\|\bm f\|_{C(K)}+\|\bm \varphi\|_{C(K)})(1-\alpha^d)^{1/p}[\bm \lambda(K)]^{1/p}.
    \end{split}
\end{equation}

We first choose $\delta$ sufficiently small such that the right hand side of \eqref{eq:K2alpha-estimate} is not greater than $\varepsilon$, then choose $\alpha$ such that  $1- \alpha$ is sufficiently small,
and $(\|\bm f\|_{C(K)}+\|\bm \varphi\|_{C(K)})(1-\alpha^d)^{1/p} \le \varepsilon$. The we conclude the result since $\bm f \circ u^{\otimes} \in \Acal$.

 \end{proof}

\subsection{Complete Proof of Theorem~\ref{thm:code}}
\label{sec:proof:eqcode:complete}
In this section, we complete the proof of Theorem~\ref{thm:code}.
As discussed at the beginning of this section,
we first consider the case when there are no stabilizers to be dealt with.

~

\paragraph{\textbf{Step 1.}}

We first show that if $\Acal$ has the following point reordering property, then $\Acal$ has the point matching property.

\begin{quote}
    (Point Reordering Property) For any shift distinct point set $\bm x^j, j = 1,2,\cdots, M$,
    we can find a mapping $\bm \varphi \in \Acal$ such that
    $$[\bm \varphi(\bm x^j)]_{\bm i} > [\bm \varphi(\bm x^{j'})]_{\bm i'}$$ if $j < j'$ or $j = j'$ but $\bm i \prec \bm j$. Here the partial order $\prec$ is the lexicographical order. For brevity, we say in this case $\bm\varphi(\bm x^j)$ is \textit{ordered}.
\end{quote}

Without loss of generality we can assume that $\bm y^j$ is also shift distinct. Suppose there exist $\bm \varphi_{\bm x}$ and $\bm \varphi_{\bm y} \in \Acal$ such that $\bm \varphi_{\bm x}(\bm x^j), j = 1,2,3,\cdots, M$ is ordered, and $\bm \varphi_{\bm y}(\bm y^j), j = 1,2,\cdots, M$ is ordered. Then we can find a continuous mapping $u$ such that
$$ u([\bm \varphi_{\bm x}(\bm x^j)]_{\bm i}) = u([\bm \varphi_{\bm y}(\bm y^j)]_{\bm i})$$
holds for $j = 1,2,\cdots,M$ and all the indices $\bm i$.
Therefore, the mapping $\bm \varphi_{\bm y}^{-1} \circ u^{\otimes} \circ \bm \varphi_{\bm x}$ is then constructed to satisfy the point matching property.

~

\paragraph{\textbf{Step 2.}}

To begin the proof of the point matching property,
we first prove that there exists a mapping $\bm \beta \in \Acal$, such that
$$[\bm \alpha(\bm x^1)]_{\bm i} > [\bm \alpha(\bm x^j)]_{\bm i'}$$
for $j \neq 1$ and any indices $\bm i, \bm i'$.
We first show that we can perturb the point set
$\bm x^j, j = 1,2,\cdots, M $ such that all the coordinate $[\bm x^j]_{\bm i}, j = 1,2,\cdots, M, \bm i \le \bm n,$ are different.
In what follows, we say that in this case $\bm x^j$ are \textit{perturbed}.

The perturbation argument is based on the following minimal argument.
For $\bm \alpha \in \Acal$, consider the following quantity:
$$E(\bm \alpha) = \{ (\bm i,j,\bm i',j) : \bm i \neq \bm i' \text{ or } j \neq j', [\bm \alpha(\bm x^j)]_{\bm i} = [\bm \alpha(\bm x^{j'})]_{\bm i} \}.$$
Suppose $\bm \alpha$ minimizes this quantity, it suffices to show that $E(\bm \alpha) = 0$. Otherwise, we consider a pair $(\bm I, J)$ and $(\bm I', J')$ sch that $(\bm I, J) \neq (\bm I', J')$ but
$$[\bm \alpha(\bm x^J)]_{\bm I}  = [\bm \alpha(\bm x^{J'})]_{\bm I'}.$$
Since $\bm \alpha(\bm x^J)$ and $\bm \alpha(\bm x^{J'})$ must be shift distinct, no matter whether $J$ and $J'$ are identical, we can deduce that there exists a $\bm k$ and $\bm e$, where $\bm e = (0,0,\cdots, 1,\cdots, 0,\cdots, 0)$, such that
\begin{equation}
[\bm \alpha(\bm x^J)]_{\bm I+\bm k} = [\bm \alpha(\bm x^{J'})]_{\bm I' + \bm k}
\end{equation}
but
\begin{equation}
    [\bm \alpha(\bm x^J)]_{\bm I+\bm k+ \bm e } \neq [\bm \alpha(\bm x^{J'})]_{\bm I' + \bm k + \bm e }.
\end{equation}
So without loss of generality,
we may assume that $\bm k = \bm 0$, and $\bm e = (1,0,0,\cdots,0)$.

Consider the following dynamics
$$\frac{d}{dt}[\bm z]_{\bm i} = f(\bm z) = \sigma([\bm z]_{\bm i+\bm e}+b).$$
Here the constant $b$ is chosen to ensure that the
$$    \sigma([\bm \alpha(\bm x^J)]_{\bm I+\bm e } +b) = 0 \neq \sigma([\bm \alpha(\bm x^{J'})]_{\bm I'  + \bm e } + b).$$
Then for sufficiently small $t > 0$, the inequality
$$E(\bm \phi(\bm f,t)\circ u^{\otimes} \circ \bm \alpha) < E(\bm \alpha)$$
leads to a contradiction of minimality.
Therefore, there exists $\bm \alpha \in \Acal$ such that $\bm \alpha(\bm x^j),j = 1,2,\cdots, M$ is perturbed.

~

\paragraph{\textbf{Step 3.}}
So far, we can assume that $\{\bm x^j, j = 1,2,\cdots, M\}$ itself is perturbed since we can apply a perturbation $\bm \alpha$ constructed in the previous step to achieve it otherwise.

Consider the following quantity:
$$K(\bm \alpha) = \{(\bm i, \bm i',j) : j \neq 1, [\bm \alpha(\bm x^j)]_{\bm i} \ge  [\bm \alpha (\bm x^1)]_{\bm i'}\}.$$
We choose an $\bm \alpha \in \Acal$ to minimize this quantity in $\Acal$ subject to $\bm \alpha (\bm x^j)$ is perturbed. Now it suffices to prove that $K(\bm \alpha) = 0$. Suppose not, then there exists $(\bm I, J, \bm I')$ such that $$\omega = [\bm \alpha(\bm x^J)]_{\bm I} - [\bm \alpha(\bm x^1)]_{\bm I'}$$ is the smallest one among all choice that makes the above value non-negative. Clearly, by the minimality of $(\bm I, J, \bm I')$, no other $[\bm \alpha (\bm x^j)]_{\bm i}$ is inside the interval $([\bm \alpha(\bm x^1)]_{\bm I'}, [\bm \alpha(\bm x^J)]_{\bm I})$. Since we have assumed $\bm \alpha(\bm x^j), j = 1,2,\cdots, M$ is perturbed, then $\omega\neq0$.

We define a continuous function $v : \mathbb R \to \mathbb R$,  such that
\begin{itemize}
\item[-] $v([\bm \alpha(\bm x^J)]_{\bm I}) - v([\bm \alpha(\bm x^1)]_{\bm I'}) \le \mu$;
\item[-] for other pairs, $(\bm i,j,\bm i') \neq (\bm I, J, \bm I')$, it holds that $$ |v([\bm \alpha(\bm x^j)]_{\bm i}) - v([\bm \alpha(\bm x^{j'})]_{\bm i'})| \ge 3\varepsilon$$ for all $j$.
\end{itemize}

Here $\varepsilon$ and $\mu$ are two parameters whose values
will be determined later.
Simply speaking, what we did is just squeezing the coordinates and make $v([\bm \alpha(\bm x^J)]_{\bm I})$ and $v([\bm \alpha(\bm x^1)]_{\bm I'})$ very close to each other. The following picture illustrates this, we use the coordinate zooming function to squeeze the coordinates.

\begin{picture}(0,0)\setlength{\unitlength}{0.5cm}
    \put(0,0){\line(1,0){10}}
    \multiput(1,0)(1,0){3}{{\color{red} \circle*{0.2}}}
    \put(4,0){{\color{blue} \circle*{0.2}}}
    \put(5,0){{\color{orange} \circle*{0.2}}}
    \multiput(6,0)(1,0){3}{{\color{green} \circle*{0.2}}}
    \put(11,0){$\Rightarrow$}
    \put(13,0){\line(1,0){10}}
    \multiput(13.3,0)(1.5,0){3}{{\color{red} \circle*{0.2}}}
    \put(17,0){{\color{blue} \circle*{0.2}}}
    \put(17.3,0){{\color{orange} \circle*{0.2}}}
    \multiput(18,0)(1.5,0){3}{{\color{green} \circle*{0.2}}}

   \end{picture}

   ~

Set $\bm I_1 = \bm I + (1,0,\cdots,0)$ and $\bm I'_{1} = \bm I' + (1,0,\cdots,0)$. Consider the dynamics (for short, we only write the equations {for the coordinates} we are concerned with)
\begin{equation}
\begin{cases}
\frac{d}{dt} [\bm z]_{\bm I} = v\sigma([\bm z]_{\bm I_1}+b), \\ 
\frac{d}{dt}[\bm z]_{\bm I'} = v\sigma([\bm z]_{\bm I'_1}+b).
\end{cases}
\end{equation}

We choose certain $b$ and $v \in \mathbb R$ such that
$$v\sigma([\bm \alpha(\bm x^J)]_{\bm I} + b + \varepsilon) = 0$$and
$$v\sigma([\bm \alpha(\bm x^{1})]_{\bm I'} + b - \varepsilon) = 1.$$

From the classical ODE theory, the dynamics will move each $[\bm \alpha(\bm x^{j})]_{\bm i}$ to $[\bm a(t)^{j}]_{\bm i}$ such that
$$|\bm \alpha(\bm x^{j})_{\bm i} - [\bm a(t)^{j}]_{\bm i}| \le C_1(e^{tC_2} - 1),$$
for some constants $C_1, C_2$ depending only on $\varepsilon$.
We choose a sufficiently small $t > 0$ such that the right hand side of the above inequality is less than $\varepsilon$.
Therefore, we always have
$$v\sigma([\bm a(t)^J]_{\bm I} + b) = 0$$ and
$$v\sigma([\bm a(t)^1]_{\bm I'} + b)  > \delta := \min_{r \in[1+\varepsilon, 1+3\varepsilon]} v\sigma(r).$$
Note that this $\delta$ only depends on $\varepsilon$.
Then we can choose $\mu = \min(\frac{1}{2}t\delta, \varepsilon^2)$, and therefore we at least have
$$ [\bm a(t)^J]_{\bm I} > [\bm a(t)^1]_{\bm I'},$$
while the other order are preserved since $\varepsilon$ is now much larger than $\mu$. This contradicts with the minimal choice of $\bm \alpha$. Hence, we conclude the result.

The above argument just constructs a special dynamics
to switch two coordinates that are squeezed.
Concretely, we exchange the position of the points in orange and blue shown below.

\begin{picture}(0,0)\setlength{\unitlength}{0.5cm}
    \put(0,0){\line(1,0){10}}
    \multiput(0.3,0)(1.5,0){3}{{\color{red} \circle*{0.2}}}
    \put(4,0){{\color{blue} \circle*{0.2}}}
    \put(3.5,0.3){{\color{blue} \vector(1,0){1}}}
    \put(4.3,0){{\color{orange} \circle*{0.2}}}
    \put(4.8,-0.3){{\color{orange} \vector(-1,0){1}}}
    \multiput(5,0)(1.5,0){3}{{\color{green} \circle*{0.2}}}
    \put(11,0){$\Rightarrow$}
    \put(13,0){\line(1,0){10}}
    \multiput(13.3,0)(1.5,0){3}{{\color{red} \circle*{0.2}}}
    \put(17,0){{\color{orange} \circle*{0.2}}}
    \put(17.3,0){{\color{blue} \circle*{0.2}}}
    \multiput(18,0)(1.5,0){3}{{\color{green} \circle*{0.2}}}

   \end{picture}

~

\paragraph{\textbf{Step 4.}}

Set $\bm z^j = \bm \alpha(\bm x^j)$. 
Now we are ready to proceed with induction.
Suppose for $\bm z^2,\cdots, \bm z^M$ we have a mapping $\bm \psi \in \Acal$ to fulfill the point reordering property. We modify it to the mapping $\tilde{\bm \psi} = \bm \psi\circ u^{\otimes} \in \Acal$, such that it satisfies the following conditions
\begin{itemize}
\item[-] $\tilde{\bm \psi}(\bm z^2),\cdots, \tilde{\bm \psi}(\bm z^M)$ are ordered.
\item[-] $[\tilde{\bm \psi}(\bm z^1)]_{\bm i} > [\tilde{\bm \psi}(\bm z^j)]_{\bm i'}$ for $j \neq 1$, and indices $\bm i$, $\bm i'$.
\end{itemize}

The idea is simple: Since in the previous step the data $\bm z^j$ has been transformed so
that $[\bm z^1]_{\bm i}$ is the largest among all coordinates,
it suffices to construct a dynamics to drive all coordinates $[\bm z^1]_{\bm i}$ away from the other coordinates,
so as to leave enough space for performing the induction.
In what follows, the points in red represents the coordinates $[\bm z^1]_{\bm i}$,
while the points in black represents the others. 

~

\begin{picture}(0,0)\setlength{\unitlength}{0.5cm}
    \multiput(0.2,-1)(0,0.2){10}{\circle*{0.1}}
    \put(0,0){\line(1,0){10}}
    \multiput(0.5,0)(0.5,0){10}{\circle*{0.2}}
    \put(6,0){\color{red}\circle*{0.2}}
    \put(7,0){\color{red}\circle*{0.2}}
    \multiput(8,-1)(0,0.2){10}{\circle*{0.1}}
    \put(1.5,-1){induction region}

    \put(11,0){$\Rightarrow$}
    \put(13,0){\line(1,0){10}}
    \multiput(13.2,-1)(0,0.2){10}{\circle*{0.1}}
    \put(0,0){\line(1,0){10}}
    \multiput(13.5,0)(0.5,0){10}{\circle*{0.2}}
    \put(21.5,0){\color{red}\circle*{0.2}}
    \put(22,0){\color{red}\circle*{0.2}}
    \multiput(21,-1)(0,0.2){10}{\circle*{0.1}}
    \put(14.5,-1){induction region} 

\end{picture}

\hspace{10em}

By restricting $\bm \psi$ in the line $\mathbb R \bm 1$, we can obtain a continuous increasing bijection from $\mathbb R \bm 1$ to $\mathbb R \bm 1$. As a result, we can find $a > 0$, such that $$a > \max_{j \neq 1} \max_{\bm i} [\bm z^j]_{\bm i} + 2$$ and $$\bm \gamma(a\bm 1) > \max_{\bm i} [\psi(\bm z^j)]_{\bm i} + 2 \lip \bm \psi.$$
Define $u$ such that $u$ fixes all $[\bm z^j]_{\bm i}$ for $j \neq 1$ and all indices $\bm i$, but sends $[\bm z^1]_{\bm i}$ to the interval $[a-1,a+1]$.
We consider $\tilde{\bm \psi} = \bm \psi\circ u^{\otimes} \in \Acal$,
which satisfies the following conditions
\begin{itemize}
\item[-] $\tilde{\bm \psi}(\bm z^2),\cdots, \tilde{\bm \psi}(\bm z^M)$ are ordered;
\item[-] $[\tilde{\bm \psi}(\bm z^1)]_{\bm i} > [\tilde{\bm \psi}(\bm z^j)]_{\bm i'}$ for $j \neq 1$, and indices $\bm i$, $\bm i'$.
\end{itemize}

\paragraph{\textbf{Step 5.}}

Set $\bm p^j = \tilde{\bm \psi}(\bm z^j)$.
 Consider the following quantity for $\gamma \in \Acal$ such that
\begin{enumerate}
\item[-] $\bm \gamma (\bm p^2),\cdots, \bm \gamma(\bm p^M)$ is ordered;
\item[-] $[\bm \gamma(\bm p^1)]_{\bm i} > [\bm \gamma(\bm p^j)]_{\bm i'}$ for $j \neq 1$, and indices $\bm i = \bm i'$;
\end{enumerate}
we define the following quantity

$$L(\bm \gamma) = \{ (\bm i, \bm i') : \bm i \prec \bm i', [\bm \gamma(\bm p^1)]_{\bm i} < [\bm \gamma(\bm p^1)]_{\bm i'}\}.$$
We claim that this quantity can achieve zero.
Suppose not, then we can find a pair $(\bm I, \bm I')$ such that
$$\bm I \prec \bm I'$$ but $$ [\bm \gamma(\bm p^1)]_{\bm I'} - [\bm \gamma(\bm p^1)]_{\bm I}$$ is minimal among all the choices that make this value positive.
One can verify  that there must be no other indices $\bm k$ such that
$$  [\bm \gamma(\bm p^1)]_{\bm I'} >  [\bm \gamma(\bm p^1)]_{\bm k} >  [\bm \gamma(\bm p^1)]_{\bm I},$$
since in this case either $\bm I \prec \bm k$ or $\bm k \prec \bm I'$ should be satisfied, which contradicts with the choice of $\bm I$ and $\bm I'$.
Therefore, there exists a continuous function $v$ such that
$$v([\bm \gamma(\bm p^1)])_{\bm I'} - v([\bm \gamma(\bm p^1)]_{\bm I}) \le \mu,
$$
and
$$|v([\bm \gamma(\bm p^j)]_{\bm i}) - v([\bm \gamma(\bm p^{j'})]_{\bm i'})| \ge \varepsilon.$$
From a similar argument (squeeze and switch) as in Step 2, we can construct a new $\tilde{\bm \gamma}$ satisfying the condition but with minimal $L(\tilde{\bm \gamma})$. Therefore, we conclude the result.

~

\paragraph{\textbf{Dealing with Stabilizers}}
We conclude the proof with the situation where there are
stabilizers. 
The motivation behind the following argument is to push the coordinates of all the stabilizers (which are in red) to the leftmost area.

\begin{picture}(0,0)\setlength{\unitlength}{0.5cm}
    \put(0,0){\line(1,0){10}}
    \put(1,0){\circle*{0.2}}
    \put(2,0){\circle*{0.2}}
    \put(3,0){\color{red} \circle*{0.2}}
    \put(4,0){ \circle*{0.2}}
    \put(5,0){\color{red} \circle*{0.2}}
    \put(6,0){ \circle*{0.2}}
    \put(7,0){ \circle*{0.2}}
    \put(8,0){ \color{red}\circle*{0.2}}
    \put(11,0){$\Rightarrow$}
    \put(13,0){\line(1,0){10}}
    \put(14,0){\color{red} \circle*{0.2}}
    \put(15,0){\color{red} \circle*{0.2}}
    \put(16,0){\color{red} \circle*{0.2}}
    \put(17,0){ \circle*{0.2}}
    \put(18,0){ \circle*{0.2}}
    \put(19,0){ \circle*{0.2}}
    \put(20,0){ \circle*{0.2}}
    \put(21,0){ \circle*{0.2}}

   \end{picture}

We prove that there exists a $\bm \zeta \in \Acal$ such that
\begin{equation}
[\bm \zeta (\bm x^j)]_{\bm i} > [\bm \zeta(\bm z^{j'})]_{\bm i'}
\end{equation}
for all the possible choices of $(\bm i, j), (\bm i',j')$.
In such a $\bm \zeta$ exists, we can proceed,
as we did in Step 3, to find a $\bm \varphi \in \Acal$, such that
\begin{enumerate}
\item  $$[\bm \varphi(\bm x^j)]_{\bm i} > [\bm \varphi(\bm x^{j'})]_{\bm i'}$$ if $j < j'$ or $j = j'$ but $\bm i \prec \bm j$.
\item $$[\bm \varphi (\bm x^j)]_{\bm i} > [\bm \varphi(\bm z^{j'})]_{\bm i'}$$ for all possible choice of $(\bm i, j), (\bm i',j')$.
\end{enumerate}
We first show that, with this point reordering property with stabilizers, we can prove the point matching property.
Compared to what we did in Step 1, it suffices to additionally assign the value $\bm o^{1},\bm o^{2},\dots,\bm o^{N}$, as the target of $\bm s^{j}$, where each coordinate of $\bm p^j$ is chosen to be $\le 1$.

Suppose for target $\bm y^{1},\cdots, \bm y^{M}$ and $\tilde{\bm o} = 0$, such a $\varphi_{\bm y} \in \Acal$ can be found. We choose $\bm o^{1},\cdots, \bm o^N$ around the value $\varphi_{\bm y}(\tilde{\bm o})$, such that
$$|\bm o^{i} - \varphi_{\bm y}(\tilde{\bm o})| \le \varepsilon,$$
and moreover we can assign $v$ in these value such that
$v([\bm s^j]_{\bm i}) = [\bm o^j]_{\bm i}$. Hence, the requirement of point matching property can be fulfilled if we choose $\varepsilon < \frac{1}{2}(\lip \bm \varphi_{\bm y}^{-1})^{-1}.$

Now we prove the existence of $\bm \zeta$. Consider the following quantity
$$H(\bm \zeta) := \{(\bm i, j, \bm i', j'): [\bm \zeta (\bm x^j)]_{\bm i} \le [\bm \zeta(\bm s^{j'})]_{\bm i'},
\}$$
and we choose $\bm \zeta \in \Acal$ to minimize this quantity. We only need to show that $H(\bm \zeta) = 0$.
Otherwise, we can prove that we can construct
a new $\bm \zeta \in \Acal$ with a lower value of $H(\bm \zeta)$.

This construction is similar to what we did in Step 2, in that
we only need to find a such a pair $(\bm I, J,\bm I', J')$ such that
\begin{equation}
    \label{eq:zeta} [\bm \zeta(\bm s^{J'})]_{\bm I'} \ge  [\bm \zeta (\bm x^J)]_{\bm I}
\end{equation}
and there are no other coordinates between these two value,
but with an $\bm e = (0,\cdots,1, \cdots,0)$ such that
$[\bm \zeta(\bm x^J)]_{\bm I+\bm e} \neq  [\bm \zeta(\bm s^{J'})]_{\bm I'+\bm e}$. 

To complete the proof, 
we assert that such pairs can be found.
Suppose this assertion does not hold. Since we assume that $H(\bm \zeta) \neq 0$, which immediately implies that there exists at least one pair $J$ and $J'$, such that for some multi-indices $\bm I$ and $\bm I'$, \eqref{eq:zeta} holds.
We choose such $(\bm I, J, \bm I', J')$ to minimize the quantity
$$\omega = [\bm \zeta(\bm s^{J'})]_{\bm I'}-  [\bm \zeta (\bm x^J)]_{\bm I}.$$
If this quantity $\omega$ does not equal to zero, then clearly there is no other coordinates between these two values. But since the assertion
does not hold, we can derive that
$$[\bm \zeta(\bm x^J)]_{\bm I+\bm e} =  [\bm \zeta(\bm s^{J'})]_{\bm I'+\bm e}.$$
Therefore, the quantity $\omega$ should be zero.
Thus, the problem reduces the case when $\omega = 0$.

In this case, we start from a pair $[\bm x^J]_{\bm I} = [\bm s^{J'}]_{\bm I'}$, we can show that for all $\bm e = (0,\cdots,1,\cdots,0)$, we have
$$[\bm x^J]_{\bm I+\bm e} = [\bm s^{J'}]_{\bm I'+\bm e}.$$
Repeating this procedure, we can know that the above identity holds for all choice of $\bm e$. Therefore, there exists a shift operator
$\trans_{\bm k}$ such that $\trans_{\bm k}\bm x^{J} = \bm s^{J'}$,
which also leads to a contradiction, since it implies that $\bm x^J$ is a stabilizer.

\section{Sharpness Results}
This section proves the sharpness result, i.e., the second and third part of \Cref{thm:main,thm:eq}. The proof in this section are all constructive.
\subsection{Sharpness of the kernel size requirement}
\label{sec:sharp-kernel-size}
In this subsection, we prove the second part of Theorem~\ref{thm:main}.
Consider the kernels with support $\bm \ell$ such that $\min \ell_s = 1$.
Without loss of generality we can assume that $\ell_1 = 1$.

We use the following example to illustrate the main intuition behind this sharpness result.
More precisely, we show that the sum of two univariate function cannot approximate a bivariate function well.
As an explicit example, we show that there exists $\varepsilon_0$ such that
$$\|xy - f(x) - g(y)\|_{L^p([0,1]^2)} \ge \varepsilon_0$$
for all choice of $L^p$ functions $f$ and $g$.
Suppose that for some $f,g \in L^p([0,1]^2)$,
$\|xy - f(x) - g(y)\|_{L^p([0,1]^2)} =  \varepsilon.$ we define
$I = [0,1/2]^2$ and $\bm p_1 = [0,0], \bm p_2 = [1/2,0],\bm p_3 = [0,1/2], \bm p_4 = [1/2,1/2]$.
For convenience, denote by $h(x,y) = xy$.
Consider the following value
$$M = \|h(\bm x + \bm p_1) + h(\bm x + \bm p_4) - h(\bm x+\bm p_2) - h(\bm x + \bm p_3)\|_{L^p(I)}.$$
Direct calculation yields that $M = 4^{-\frac{p+1}{p}}>0.$ However, for $\widehat{h}(x,y) = f(x) + g(y)$, it holds that
$$\widehat h(\bm x + \bm p_1) + \widehat h(\bm x + \bm p_4) - \widehat h(\bm x+\bm p_2) - \widehat h(\bm x + \bm p_3) = 0.$$
By triangle inequality,
$M \le 0 + \sum_{i = 1}^{4} \| h (\bm x + \bm p_i) - \hat{h}(\bm x+\bm p_i)\|_{L^p(K)} \le 4 \varepsilon.$
Therefore, it holds that $\varepsilon\ge  \frac{M}{4} > 0$, concluding the result.

For the general case of establishing the sharpness result, we mimic the example above.
We introduce the following auxiliary space.
For $\bm x \in \mathbb X$ and integer $I$, define $\bm x_{I:}$ as the tensor in
$\mathbb X_1 = \mathbb R^{n_2\times\cdots \times n_d}$,
such that
$[\bm x_{I:}]_{(i_2,\cdots, i_d)} = [\bm x]_{(I,i_2,\cdots,i_d)}.$
Define $\Hcal$ as the mapping $\mathbb X \to \mathbb R$ such that
$$\Hcal := \{g \circ \bm \varphi: \mathbb X \to \mathbb R: \exists \bm \psi :\mathbb X_1 \to \mathbb X_1, \text{ such that } [\bm \varphi(\bm x)]_{I:} = \bm \psi(\bm x_{I:}) \}.$$
We illustrate the function family $\Hcal$ in the following $\mathbb R^{3\times 3}$ example.
If $F \in \Hcal$, then $F$ should have the following form:
$$F = g\left( \begin{bmatrix} \bm \psi(\bm x_{(1,1)},\bm x_{(1,2)}, \bm x_{(1,3)}) \\ \bm \psi(\bm x_{(2,1)},\bm x_{(2,2)}, \bm x_{(2,3)}) \\ \bm \psi(\bm x_{(3,1)},\bm x_{(3,2)}, \bm x_{(3,3)}) \end{bmatrix} \right).$$
By the assumption on $\bm \ell$, it is straightforward to deduce that
$\cnn_{r,\bm \ell} + \mathbb R$, $\rescnn_{r,\bm \ell} + \mathbb R$, $\code_{r,\bm \ell} + \mathbb R$ are all in $\Hcal$.
It remains to show that $\Hcal$ does not possess the shift invariant UAP.
The idea follows the simple example above,
by noting that in this case $x$ is now
$(\bm x_{(1,1)},\bm x_{(1,2)}, \bm x_{(1,3)})$,
$y$ is now $(\bm x_{(2,1)},\bm x_{(2,2)}, \bm x_{(2,3)})$,
and $f(x) + g(y)$ is now some general permutation invariant function.
We now carry out this proof.
\begin{proof}[Proof of the second part of Theorem~\ref{thm:main}]
As discussed before, it suffices to show that $\Hcal + \mathbb R$ does not satisfy UAP.
Let us set
$
    F(\bm x) = \prod_{i_1 > i_2} (\psi([\bm x]_{i_1:}) - \psi([\bm x]_{i_2:})),$ where $ \psi(\bm y) = \prod_{\bm i'} [\bm y]_{\bm i'},
$
and $K = [0,1]^{\bm n}$, we show that there exists a constant $\varepsilon_0 > 0$, such that for all $H \in \Hcal$, it holds
\begin{equation}
\label{eq:Hcal}
\|F - H\|_{L^p(K)} \ge \varepsilon_0.
\end{equation}
Choose two subregions of $K$,
$K_1 = \{ \bm x \in K, \bm x_{1:} \gg \bm x_{2,:} \gg \cdots \gg\bm x_{n_1:}\},$
and
$K_2 = \{ \bm x \in K, \bm x_{2:} \gg \bm x_{1:} \gg x_{3:} \gg \cdots \gg\bm x_{n_1:}\}.$
Here, we say for $\bm z_1$ and $\bm z_2 \in \mathbb X_1$,
$\bm z_1 \gg \bm z_2$ means $\min_{\bm i}[\bm z_1]_{\bm i} \ge \max_{\bm i}[\bm z_2]_{\bm i}.$
Consider the mapping $\bm \tau$, that flips first and second rows (along first index), that is,
\begin{equation}
[\bm \tau(\bm x)]_{2:} = [\bm x]_{1:}, \quad [\bm \tau(\bm x)]_{1:} = [\bm x]_{2:}, \quad [\bm \tau(\bm x)]_{i:} =  [\bm x]_{i:}, i \neq 1,2.
\end{equation}
Then $\bm \tau(K_1) = K_2$.
By the definition of $\Hcal$, we have
$(H\circ \tau)(\bm x) = H(\bm x)$ for $\bm x \in K_1$,
and $H \in \Hcal + \mathbb R$. But $F\circ \tau= -F$, which implies that
\begin{equation}
    \begin{split}
2\|F\|_{L^p(K_1)} &= \|F - F\circ\bm \tau\|_{L^p(K_1)} \\ & \le \|H - H\circ\bm \tau\|_{L^p(K_1)} + \|F - H\|_{L^p(K)} + \|F\circ \bm \tau - H\circ \bm \tau\|_{L^p(K)} \\ & = 2\|F - H\|_{L^p(K)}.
    \end{split}
\end{equation}
In the last equation, the last two terms are equal since $\bm \tau$ is measure preserving.
\end{proof}

For the equivariant version, we propose $$\Hcal_{eq} := \{ \bm \varphi: \mathbb X \to \mathbb R: \exists \bm \psi :\mathbb X_1 \to \mathbb X_1, \text{ such that } [\bm \varphi(\bm x)]_{I:} = \bm \psi(\bm x_{I:}) \},$$ which contains $\eqcnn_{r,\bm \ell} + \mathbb R, \eqrescnn_{r,\bm \ell} + \mathbb R$ and $\eqcode_{r,\bm \ell} + \mathbb R$. The corresponding proof is quite similar.

\begin{proof}[Proof of the second part of Theorem~\ref{thm:eq}]
Set $[\bm F(\bm x)]_{\bm j} = \prod_{i_1 > i_2} (\psi([\bm x]_{i_1:}) - \psi([\bm x]_{i_2:}))$ for each index $\bm j$, where $ \psi(\bm y) = \prod_{\bm i'} [\bm y]_{\bm i'}$. The remaining argument is similar to the previous one.
\end{proof}
\subsection{Sharpness of the channel number requirement}
\label{sec:sharp-channel-number}
In this subsection, we show that the FCNN with only one channel per layer cannot satisfy the shift invariant UAP.
The key to proving this part is the following observation.
Suppose $G \in \cnn_{1,\bm \infty} + \mathbb R$, then
$G$ is continuous, piecewise linear. Moreover, by direct calculation, we obtain that there exists $\bm g \in \mathbb X$, such that for a.e. $\bm x \in K$, the gradient of $G$ is $\bm 0$ or $\bm g$.
The last assertion can be proved from direct calculation on the gradient of $G$.
\begin{proof}[Proof of the third part of Theorem~\ref{thm:main}]
Based on the above observation, we now show that $F(\bm x) = |\bm x|$ cannot be approximated by such $G$ in the unit ball $B(\bm 0,1)$.
By a change of variables we rewrite
\begin{equation}
    \int_{\bm x \in B(0,1)} |F(\bm x) - G(\bm x)|^p d \bm x
    =
    \int_{\bm \xi \in \partial B(0,1)}\int_0^1 |F(t\bm \xi) - G(t\bm \xi)|^p t^{|\bm n |- 1} dt dS,
\end{equation}
where $|\bm n| = n_1n_2\cdots n_d$.
We consider the hemisphere defined by $\bm \xi \in \partial B(0, 1)$
such that $\bm \xi \cdot \bm g < 0$.
On this hemisphere, $f(t) = F(t\bm \xi) = t$ is increasing while $g(t) = G(t\bm \xi)$ is decreasing
in $t$.

To proceed, we state and prove the following lemma.
\begin{lemma}
For $f : [a,b] \to \mathbb R$ that is increasing, we have
$$\inf_{g \text{ decreasing in } [a,b]} \int |f - g|^p = \inf_{g \text{ constant in} [a,b]}\int |f - g|^p.$$
\end{lemma}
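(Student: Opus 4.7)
My plan is the following replacement argument. The inequality $\inf_{g \text{ constant}} \int |f-g|^p \ge \inf_{g \text{ decreasing}} \int |f-g|^p$ is immediate since constants are decreasing. For the reverse direction I will take an arbitrary decreasing $g \in L^p([a,b])$ and show that its mean $\bar g := \frac{1}{b-a}\int_a^b g(x)\,dx$, which is a constant, satisfies
\begin{equation*}
\int_a^b |f(x) - g(x)|^p\,dx \;\ge\; \int_a^b |f(x) - \bar g|^p\,dx.
\end{equation*}
Taking the infimum over $g$ on the left then yields $\inf_{\text{decreasing}} \ge \inf_{\text{constant}}$.

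To prove this replacement inequality I will combine two classical tools. The first is the first-order convexity inequality for $y \mapsto |y|^p$,
\begin{equation*}
|y|^p \;\ge\; |y_0|^p + p\,\phi(y_0)(y - y_0), \qquad \phi(u) := |u|^{p-1}\operatorname{sgn}(u),
\end{equation*}
valid for every $p \ge 1$. Applying it pointwise with $y = f(x) - g(x)$ and $y_0 = f(x) - \bar g$, so that $y - y_0 = \bar g - g(x)$, and integrating over $[a,b]$, reduces the desired inequality to
\begin{equation*}
\int_a^b \phi\bigl(f(x) - \bar g\bigr)\bigl(\bar g - g(x)\bigr)\,dx \;\ge\; 0.
\end{equation*}

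The second tool is Chebyshev's integral inequality: if $\alpha,\beta$ are both non-decreasing on $[a,b]$, then $\int_a^b \alpha\beta\,dx \ge \frac{1}{b-a}\int_a^b \alpha\,dx \cdot \int_a^b \beta\,dx$. In the integrand above, $\phi(f-\bar g)$ is non-decreasing because it is the composition of the non-decreasing map $\phi$ with the non-decreasing map $f-\bar g$; and $\bar g - g$ is non-decreasing because $g$ is decreasing. Since $\int_a^b(\bar g - g)\,dx = 0$ by the very definition of $\bar g$, Chebyshev's inequality gives exactly the required non-negativity, completing the plan.

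I do not anticipate any serious obstacle. The only mild subtlety is the case $p=1$, where $\phi = \operatorname{sgn}$ is multi-valued at the origin; fixing any measurable selection (e.g.\ $\operatorname{sgn}(0) := 0$) preserves both the subgradient form of the convexity inequality and the monotonicity of $\phi$, so the argument carries through unchanged.
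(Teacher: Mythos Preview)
Your argument is correct: the convexity (subgradient) inequality for $|y|^p$ together with Chebyshev's integral inequality does yield $\int |f-g|^p \ge \int |f-\bar g|^p$, and the $p=1$ case is handled by your remark on choosing a monotone selection of the subdifferential.

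However, your route is genuinely different from the paper's. The paper does not use the mean $\bar g$; instead it picks the constant $\tilde g := f(t_0)=g(t_0)$ at the (unique, if it exists) crossing point of the increasing $f$ and decreasing $g$, and then observes the \emph{pointwise} inequality $|f(t)-\tilde g|\le |f(t)-g(t)|$ for every $t$, which immediately gives the integral inequality for any $p$. Your choice $\bar g$ does \emph{not} satisfy this pointwise bound in general (e.g.\ $f(x)=x$, $g(x)=2-2x$ on $[0,1]$ at $x=2/3$), so you genuinely need the averaging step via Chebyshev. The trade-off: the paper's argument is shorter and more elementary, and in fact proves the stronger pointwise statement; your argument avoids the case analysis (crossing vs.\ no crossing) and uses a canonical constant, at the cost of invoking convexity and Chebyshev.
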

\begin{proof}
The $\leq$ part is obvious, so it suffices to prove the $\geq$ part.
Given any decreasing $g$, set a constant $\tilde g$ such that
$\tilde g(t) = g(t_0)$ if $f(t_0) = g(t_0)$ for some $t_0$, and $g = f(1)$ if there does not exist such a $t_0$.
We can easily verify that $|f(t) - g(t)| \ge |f(t) - \tilde g(t)|$ for all $ t\in [0,1]$.
\end{proof}
Using this lemma, we can show that
\begin{equation}
    \begin{split}
\int_0^1|f(t) - g(t)|^pt^{|\bm n| -1} dt
 \ge &
 \int_{1/2}^1|f(t) - g(t)|^pdt \cdot (\frac{1}{2})^{|\bm n|-1}
 \\ \ge &
 (\frac{1}{2})^{|\bm n|-1}\inf_{a \in [1/2,1]} \int_{1/2}^1|f(t) - a|^pdt
 \\ = &
 (\frac{1}{2})^{|\bm n|-1}\inf_{a \in [1/2,1]} \frac{(1-a)^{p+1} + (a - 1/2)^{p+1}}{p+1}
 \\ = & 2^{-|\bm n| + 1}\cdot 2 \cdot \frac{(1/4)^{p+1}}{p+1} =: C_p.
    \end{split}
\end{equation}
The last line follows from the fact that the
minimization problem attains its infimum at $a = \frac{3}{4}.$
Therefore,
$\int_{\bm x \in B(0,1)} |F(\bm x) - G(\bm x)|^p > \frac{C_p}{2}\alpha$,
where $\alpha$ is the Lebesgue measure of $\partial B(0,1)$.
This implies the third part of Theorem~\ref{thm:main}.
\end{proof}

For the equivariant version, we can prove the third part of Theorem~\ref{thm:eq} in a similar way.

\section{Conclusion}
\label{sec:conclusion}
In this paper,
we provided the first approximation result of deep fully convolutional neural networks
with the fixed channel number and limited convolution kernel size,
and quantify the minimal requirements on these to achieve universal approximation
of shift invariant (or equivariant) functions.
We proved that the fully convolutional neural network with residual blocks
$\rescnn_{r,\bm \ell}$ achieves shift invariant UAP if and only if $\bm r \ge 1$ and
$\bm \ell \ge \bm 2$. This result does not require the specific form of the
activation function. For the non-residual version, we proved that $\cnn_{r, \bm
\ell}$ has the shift invariant UAP if and only if $\bm r \ge 2$ and $\bm \ell
\ge \bm 2$. The if part requires specifying $\sigma$ to be the ReLU operator. In
addition, the results also hold for their corresponding equivariant versions.
The proof is based on developing tools for dynamical hypothesis spaces, which have
the flexibility to handle variable architectures and obtain approximation results
that highlight the power of function composition.

\revise{

}
We conclude with some discussion on future directions.
In this paper, the shift invariant UAP for $\cnn_{2,\bm 2}$ was established for ReLU
activations.
The proof relies on the special structure of ReLU: $\relu(x) = x$ for $x > 0$,
hence we can make use of translation to replace the residual part.
This construction was outlined in the proof of the first part of Theorem~\ref{thm:main}.
It will be of interest to study if the other activations, such as sigmoid or tanh,
can also achieve shift-invariant UAP at fixed widths and limited kernel sizes.
Further, one may wish to establish explicit approximation rates
in terms of depth, and identify suitable function classes that can be efficiently approximated
by these invariance/equivariance preserving networks.
Finally, one may also consider extending the current theory to handle up-sampling and down-sampling
layers that are commonly featured in deep architectures.

In addition to approximation error, it is very natural and useful to consider the 
generalization error (statistical error) in the overall analysis of a machine learning model.
Compared to shallow and wide models, few generalization
results in the deep-but-narrow setting (for layers greater than
3) have been established.
While the current paper only concerns approximation theory,
it is nevertheless an important future direction to establish generalization estimates.

\bibliography{ref}
\bibliographystyle{plain}
\end{document}